\documentclass{article}

\usepackage{iclr2025_conference,times}

\usepackage{amsmath,amsfonts,bm}









\def\eqref#1{equation~\ref{#1}}









\def\1{\bm{1}}










\DeclareMathAlphabet{\mathsfit}{\encodingdefault}{\sfdefault}{m}{sl}
\SetMathAlphabet{\mathsfit}{bold}{\encodingdefault}{\sfdefault}{bx}{n}













\usepackage[utf8]{inputenc} 
\usepackage[T1]{fontenc}    
\usepackage{hyperref}       
\usepackage{url}            
\usepackage{booktabs}       
\usepackage{amsfonts}       
\usepackage{nicefrac}       
\usepackage{microtype}      
\usepackage{xcolor}         

\def\method{DMSG}
\usepackage{graphicx} 
\usepackage{amsmath} 
\usepackage{amsthm}
\usepackage{algpseudocode}
\usepackage{multirow}
\usepackage{subcaption}
\usepackage[ruled,linesnumbered]{algorithm2e}
\usepackage{amssymb}
\usepackage{extarrows}
\usepackage{colortbl}
\usepackage{makecell} 
\usepackage{tabularx}
\usepackage{booktabs}

\usepackage{wrapfig}
\usepackage{lipsum}
\newtheorem{prop}{Proposition}

\newtheorem{theorem}{Theorem}
\newtheorem{lemma}{Lemma}

\definecolor{deepyellow}{RGB}{204, 153, 0}
\newcommand{\revision}[1]{\textcolor{black}{#1}}

\title{Towards Continuous Reuse of Graph Models via Holistic Memory Diversification}

\author{Ziyue Qiao$^{1}$ \hspace{4pt} Junren Xiao$^2$ \hspace{4pt} Qingqiang Sun$^{3,*}$ \hspace{4pt} Meng Xiao$^4$ \hspace{4pt} Xiao Luo$^5$ \hspace{4pt} Hui Xiong$^{2,}\thanks{Corresponding Authors}$ \\
$^1$School of Computing and Information Technology, Great Bay University \\
$^2$AI Thrust, The Hong Kong University of Science and Technology (Guangzhou) \\ 
$^3$School of Engineering, Great Bay University \\
$^4$Computer Network Information Center, Chinese Academy of Sciences \\
$^5$Department of Computer Science, University of California, Los Angeles \\
\texttt{\{ziyuejoe, junrenxiao138, sunisfighting\}@gmail.com}\\
\texttt{shaow@cnic.com, xiaoluo@cs.ucla.edu, xionghui@ust.hk}
}

\iclrfinalcopy
\begin{document}

\maketitle

\begin{abstract}
    This paper addresses the challenge of incremental learning in growing graphs with increasingly complex tasks. The goal is to continuously train a graph model to handle new tasks while retaining proficiency in previous tasks via memory replay. 
    Existing methods usually overlook the importance of memory diversity, limiting in selecting high-quality memory from previous tasks and remembering broad previous knowledge within the scarce memory on graphs.
    To address that, we introduce a novel holistic Diversified Memory Selection and Generation (DMSG) framework for incremental learning in graphs, which first introduces a buffer selection strategy that considers both intra-class and inter-class diversities, employing an efficient greedy algorithm for sampling representative training nodes from graphs into memory buffers after learning each new task. 
    Then, to adequately rememorize the knowledge preserved in the memory buffer when learning new tasks, a diversified memory generation replay method is introduced. This method utilizes a variational layer to generate the distribution of buffer node embeddings and sample synthesized ones for replaying. Furthermore, an adversarial variational embedding learning method and a reconstruction-based decoder are proposed to maintain the integrity and consolidate the generalization of the synthesized node embeddings, respectively.
    Extensive experimental results on publicly accessible datasets demonstrate the superiority of \method{} over state-of-the-art methods. The code and data can be found in \url{https://github.com/joe817/DMSG}.
\end{abstract}

\section{Introduction}

Graphs, owing to their flexible relational data structures,  are widely employed for many applications in various domains, including social networks~\citep{jiang2016understanding,qiao2021rpt}, recommendation systems~\citep{wu2022graph}, and bioinformatics~\citep{zhang2021graph,huang2024praga}. With the increasing prevalence of graph data, graph-based models like Graph Neural Networks (GNNs) have gained significant attention due to their ability to capture complex structural relationships and those dynamic variants also demonstrate remarkable inductive capabilities on growing graph data~\citep{ju2024comprehensive,pareja2020evolvegcn,manessi2020dynamic}. 
However, as the growing graph in Figure \ref{fig:intro} shows, when new nodes are added, the associated learning tasks can become increasingly complex. 
For example, the graph models on academic networks might need to predict the topics of papers in highly dynamic research areas where the topic rapidly emerges, and those on recommendation networks might need to adapt continually to new user preferences.
Recent research on the inductive capability and adaptability of GNNs often remains limited to a specific task~\citep{hamilton2017inductive,han2021adaptive,qiao2023semi} and cannot be readily applied to incremental tasks.
Moreover, it is often inefficient to train an entirely new model from scratch every time a new learning task is introduced. 
In recent years, model reuse via incremental learning~\citep{tan2022graph, kim2022dygrain}, also known as continual learning or lifelong learning, has led to exploring more economically viable pipelines, enabling the model to adaptively learn new tasks while maintaining the knowledge from old tasks.

The main challenge of incremental learning on graphs lies in mitigating catastrophic forgetting. As the graph model learns from a sequence of tasks on evolving graphs, it tends to forget the information learned from previous tasks when acquiring knowledge from new tasks. 
One prevalent approach to address this issue is the memory replay method, a human-like method that typically maintains a memory buffer to store the knowledge gained from previous tasks. When learning a new task, the model not only focuses on the current information but also retrieves and re-learns from memory, preventing the model from forgetting what was learned previously as it takes on new tasks. 
Nevertheless, this method has two major focuses to address on graphs: (1) \textbf{How to select knowledge from old graphs to form more high-quality memory buffers?} 
Existing methods usually select representative training samples as knowledge. 
However, determining which nodes in the graph are more representative is difficult and usually a time-consuming process.
Furthermore, most methods~\citep{zhang2022sparsified,wang2022streaming} select samples all into one same buffer without considering the inter-class differences between various previous tasks, which may degrade the quality of preserved knowledge.
(2) \textbf{How to effectively replay the limited buffer knowledge to enhance the model's memorization of previous tasks?}
Due to constraints related to memory and training expenses, the samples chosen for the buffer are often limited. 
Many methods~\citep{zhou2021overcoming,su2023towards} concentrate on memory selection, neglecting to broaden the boundaries of memory within the buffer, resulting in a discount in replay.
Finding an effective way to replay knowledge from these limited nodes is critical to incremental learning in graphs. 

\begin{wrapfigure}{r}{0.5\textwidth}
  \centering
\includegraphics[width=0.50\textwidth, trim=20 0 30 20]{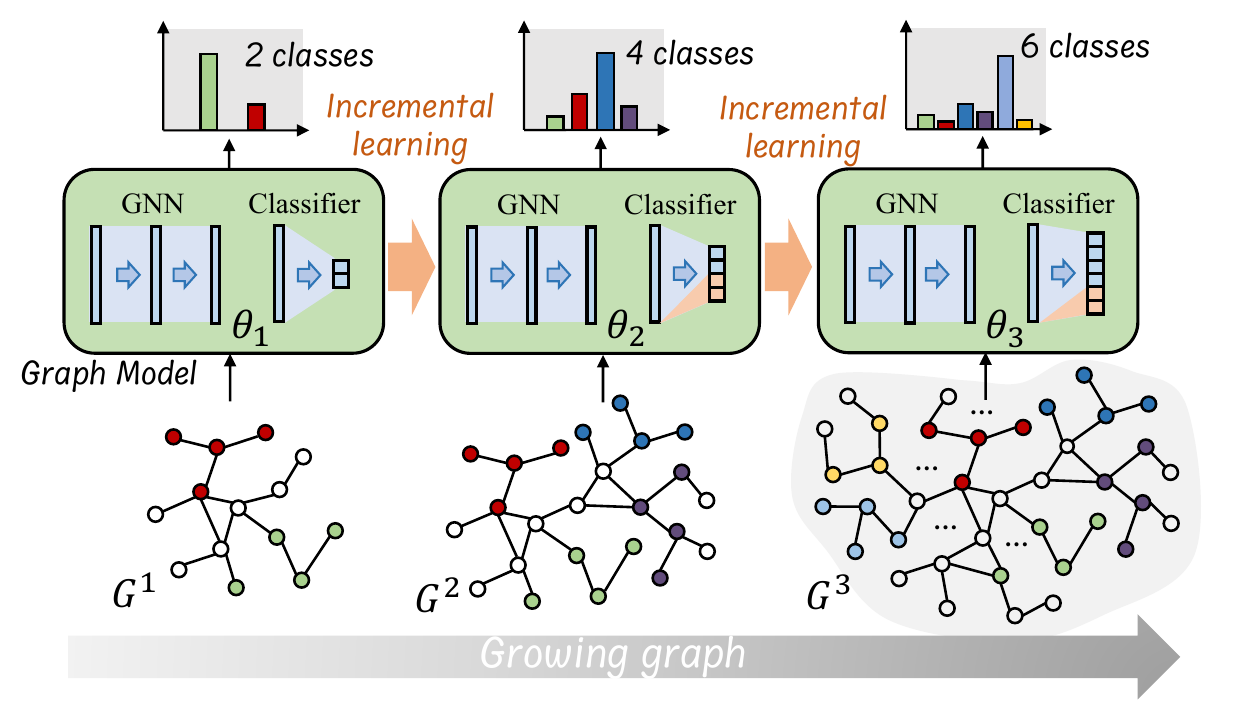}
  \caption{An example of incremental learning in growing graphs, where nodes with distinct labels are shaded in various colors. The number of classes expands as the graph grows, causing increasingly complex classification tasks. } 
  \label{fig:intro}
\vspace{-0.2cm}
\end{wrapfigure}

In this paper, we propose a novel Diversified Memory Selection and Generation (\method{}) method on incremental learning in growing graphs, devised to tackle the above challenges. 
We consider that selecting diversified memory helps in \textbf{Comprehensive Knowledge Retention}: we apply a heuristic diversified memory selection strategy that takes into account both intra-class and inter-class diversities between nodes. By employing an efficient greedy algorithm, we selectively sample representative training nodes from the growing graph, placing them into memory buffers after completing each new learning task. Furthermore, we explore the memory diversification in memory reply for \textbf{Enhanced Knowledge Memorization}: we introduce a generative memory replay method, which first leverages a variational layer to produce the distribution of buffer node embeddings, from which synthesized samples are drawn for replaying. We incorporate an adversarial variational embedding learning technique and a reconstruction-based decoder. These are designed to preserve the integrity and strengthen the generalization of the synthesized node embeddings on the label space, ensuring the essential knowledge is carried over accurately and effectively.

The main contributions can be summarized as follows: (1) We propose a novel and effective memory buffer selection strategy that considers both the intra-class and inter-class diversities to select representative nodes into buffers. (2) We propose a novel memory replay generation method on graphs to generate diversified and high-quality nodes from the limited real nodes in buffers, exploring the essential knowledge and enhancing the effectiveness of replaying. (3) Extensive experiments on various incremental learning benchmark graphs demonstrate the superiority of the proposed \method{} over state-of-the-art methods.


\section{Problem Formulation}
\label{sec:problem}

In this section, we present the formulation for the incremental learning problem in growing graphs.
Generally, a growing graph is represented by a sequential of $m$ snapshots: $G = \{G^1, G^2,..., G^m\}$, and each snapshot corresponds to the inception of a new task, represented as $\mathcal{T} = \{\mathcal{T}^1, \mathcal{T}^2,..., \mathcal{T}^m\}$. Each graph $G^i$ is evolved from the previous graph $G^{i-1}$, i.e.,  $G^{i-1}\subset G^{i}, \forall i\in 2,..., m.$, and each learning task $\mathcal{T}^i$ is more complex than the previous task $\mathcal{T}_{i-1}$.
This paper specifies the learning tasks to classification tasks and follows the class-incremental learning setting, i.e., the number of classes increases alongside graph growth, increasing the task complexity. 
In this scenario, we aim to continuously learn a model $f(\theta)$ on $\mathcal{T}$.
For the $t$-th step, the task $\mathcal{T}^t$ incorporates a training node set $\mathcal{V}^t$ with previously unseen labels (i.e., novel classes), where each vertex $v_i \in \mathcal{V}^t$ has the label $y_i \in \mathcal{Y}^t$, and $\mathcal{Y}^t = \{y^t_1,y^t_2,...,y^t_n\}$ is the set of $n$ novel classes.
The task $\mathcal{T}^t$ is to train the $f(\theta):v_i\rightarrow \bigcup_{j=1}^t \mathcal{Y}^j$ 
to ensure it can infer well on the current novel classes while preventing catastrophic forgetting of the inference ability on previous classes.

\textbf{Jointly Incremental Learning.} This is a straightforward solution for the problem, which collects training nodes of all classes of previous tasks to train $f(\theta)$ in each step. This treats the accumulated tasks $ \{\mathcal{T}^1, \mathcal{T}^2,..., \mathcal{T}^t\}$ as a whole new task and retrains the model from scratch.
However, this solution is inefficient because it leads to redundant training of labeled nodes and creates computational challenges due to the growing graph size.

\textbf{Memory Replay for Incremental Learning.} This method offers a more practical solution. Instead of gathering all previous training nodes, it maintains buffers $\mathcal{B}$ that store a small yet representative subset of training nodes for each class of previous tasks.
The objectives can be formulated as follows:
\begin{equation}
    \mathcal{L} = \underbrace{\sum_{v_i \in \mathcal{V}^t} \ell^t (f(v_i;\theta), y_i)}_{\text{Loss on new tasks}}+  \underbrace{\lambda \sum_{j=1}^{K}\sum_{v_k \in \mathcal{B}_j} \ell^t (f(v_k;\theta), y_k)}_{\text{Memory replay on previous tasks}},
\end{equation}
\noindent where $\ell^t$ is the loss function on the accumulated all class set, $K=|\bigcup_{i=1}^{t-1} \mathcal{Y}^i|$ is the number of previous classes, the $\lambda$ is a balance hyper-parameter, and $\mathcal{B}_j$ is the buffer for the $j$-th class. 
The second term ensures the representative training nodes from previous classes are included in the current training phase, efficiently mitigating the risk of catastrophic forgetting of previous classes.
Also, the size of $\mathcal{B}_j$ is much smaller than the total node number of class $j$. Thus, the number of training nodes required is significantly lower than joint training, leading to a substantial increase in efficiency.

\begin{figure*}
    \centering
    \includegraphics[width=0.94\textwidth]{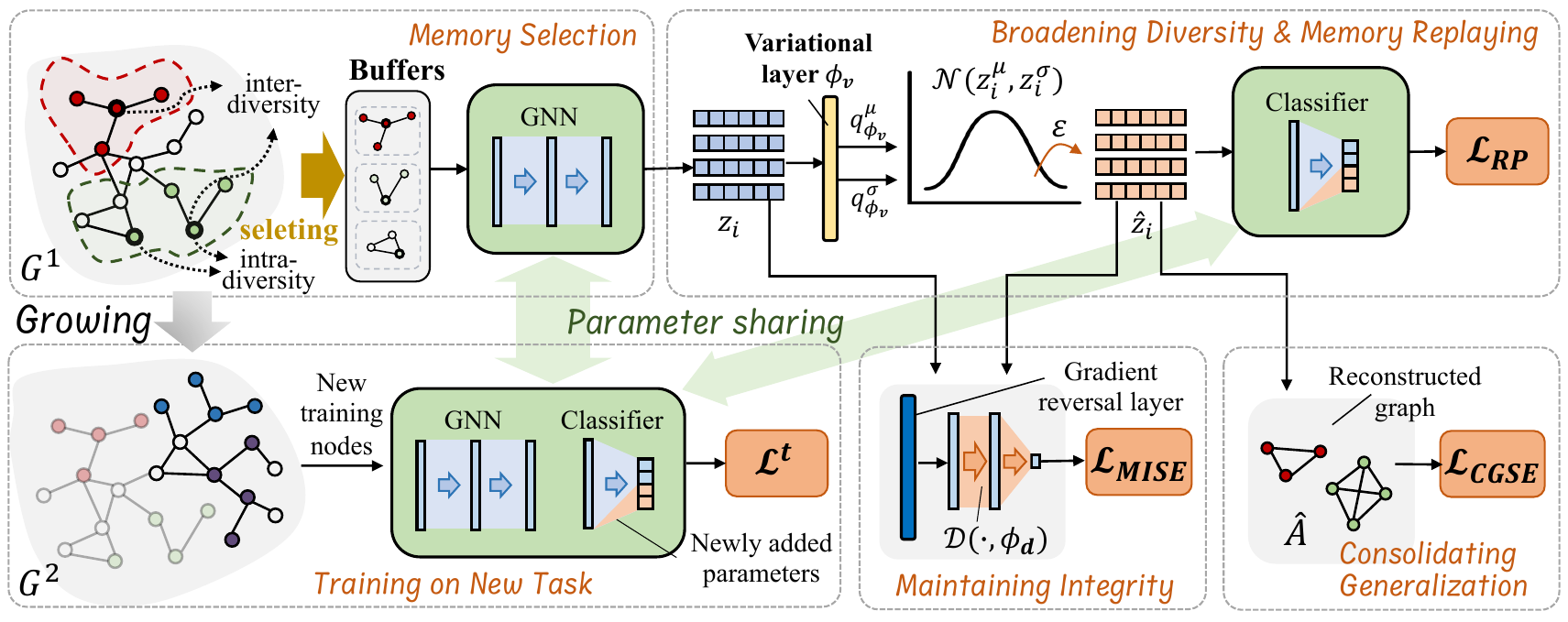}
    \vspace{-0.2cm}
    \caption{The framework of \method{}. In this instance, the graph model underwent training on a 2-class node classification task on $G^1$. Two new classes of nodes are added to $G^1$ to form $G^2$. 
    Certain nodes of the previous two classes are first selected into buffers. Then, the model is further trained on the two new classes of nodes and buffers to perform incremental learning.}
    \label{fig:frame}
    \vspace{-0.4cm}
\end{figure*}

\section{Methodology}
Without loss of generality, we choose the plain GCN model followed by a classifier head as the backbone of $f(\theta)$, which encodes each node $v_i$ in graphs into an embedding $z_i$ and a probability $p_i$.
As shown in Figure \ref{fig:frame}, initially, this model was trained on the graph $G_1$. When the $t$-th task introducing new classes arrives, we first extend new parameters (highlighted as the yellow segment) into the last layer of the classifier, ensuring the output probabilities encompass the previous and newly introduced classes.
To facilitate continual training of $f(\theta)$, we leverage the memory replay framework, which incorporates both the Heuristic Diversified Memory Selection (Section \ref{m:part1}) and the Diversified Memory Generation Replay (Section \ref{m:part2}).

\textbf{Motivation}. Consider \( p(G_{<t}) \) as the true data distribution of graphs from prior $t-1$ tasks, and \( q(\mathcal{B}_{<t}) \) as the data distribution encapsulated within the memory replay buffers \( \mathcal{B} \) sampled from $G_{<t}$. To understand the efficacy of the buffer diversity in incremental learning scenarios, we engage in a theoretical examination to indicate that a high diversity within \( \mathcal{B} \) ensures that the empirical loss \(\mathcal{L}(\theta)\)  over \( \mathcal{B} \) closely mirrors the total expected loss \( \mathcal{L}(\theta )\) over \( p(G_{<t}) \).

\begin{theorem}
\label{th:1}
Let the loss function \( \mathcal{L}(\theta, x) \) be \( \beta \)-Lipschitz continuous in respect to the input \( x \). Under this condition, the discrepancy between the expected loss under the true data distribution \( p(G_{<t}) \) and that under the replay buffer distribution \( q(\mathcal{B}_{<t}) \) is bounded as follows:
\begin{equation}
\left| \mathbb{E}_{v \sim p(G_{<t})}[\mathcal{L}(\theta, v)] - \mathbb{E}_{v \sim q(\mathcal{B}_{<t})}[\mathcal{L}(\theta, v)] \right| \leq \beta \cdot W(p(G_{<t}), q(\mathcal{B}_{<t})),
\end{equation}
where \( W(p, q) \) denotes the Wasserstein distance between distributions \( p \) and \( q \), defined by:
\(
W(p, q) = \inf_{\gamma \in \Gamma(p, q)} \mathbb{E}_{(v, v') \sim \gamma}[d(v, v')],
\)
and \( \Gamma(p, q) \) represents the set of all possible joint distributions (couplings) that can be formed between \( p \) and \( q \).
\end{theorem}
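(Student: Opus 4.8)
The plan is to recognize this as essentially a direct application of the Kantorovich--Rubinstein duality for the Wasserstein-1 distance, adapted to the Lipschitz constant of the loss. First I would recall that for any function $g : \mathcal{X} \to \mathbb{R}$ that is $1$-Lipschitz with respect to the metric $d$, the dual formulation of the Wasserstein distance gives
\begin{equation}
\left| \mathbb{E}_{v \sim p}[g(v)] - \mathbb{E}_{v \sim q}[g(v)] \right| \leq W(p, q).
\end{equation}
The map $v \mapsto \mathcal{L}(\theta, v)$ is, by hypothesis, $\beta$-Lipschitz in $v$; hence $g := \mathcal{L}(\theta, \cdot)/\beta$ is $1$-Lipschitz, and applying the bound above to $g$ and multiplying through by $\beta$ yields exactly the claimed inequality. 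I would present this cleanly as the core of the argument.

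To make the proof self-contained rather than merely citing duality, I would instead derive the bound directly from the primal definition of $W$, which avoids invoking the dual form. Fix any coupling $\gamma \in \Gamma(p(G_{<t}), q(\mathcal{B}_{<t}))$. Since the marginals of $\gamma$ are $p(G_{<t})$ and $q(\mathcal{B}_{<t})$, we can write
\begin{equation}
\mathbb{E}_{v \sim p(G_{<t})}[\mathcal{L}(\theta, v)] - \mathbb{E}_{v \sim q(\mathcal{B}_{<t})}[\mathcal{L}(\theta, v)] = \mathbb{E}_{(v, v') \sim \gamma}\big[\mathcal{L}(\theta, v) - \mathcal{L}(\theta, v')\big].
\end{equation}
Taking absolute values, passing the modulus inside the expectation via Jensen (or the triangle inequality for integrals), and then applying the $\beta$-Lipschitz property pointwise, $|\mathcal{L}(\theta, v) - \mathcal{L}(\theta, v')| \leq \beta\, d(v, v')$, gives
\begin{equation}
\left| \mathbb{E}_{v \sim p(G_{<t})}[\mathcal{L}(\theta, v)] - \mathbb{E}_{v \sim q(\mathcal{B}_{<t})}[\mathcal{L}(\theta, v)] \right| \leq \beta \cdot \mathbb{E}_{(v, v') \sim \gamma}[d(v, v')].
\end{equation}
Since this holds for every coupling $\gamma$, I would take the infimum over $\gamma \in \Gamma(p(G_{<t}), q(\mathcal{B}_{<t}))$ on the right-hand side, which by definition equals $\beta \cdot W(p(G_{<t}), q(\mathcal{B}_{<t}))$, completing the proof.

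The argument is short, and the only genuine subtlety — the part I expect to need the most care in phrasing — is the interchange of the absolute value with the expectation and the legitimacy of splitting $\mathbb{E}_p[\mathcal{L}] - \mathbb{E}_q[\mathcal{L}]$ as an expectation under the coupling $\gamma$; this requires integrability of $\mathcal{L}(\theta, \cdot)$ under both marginals, which I would note follows from $\beta$-Lipschitz continuity together with a mild assumption that the distributions have finite first moment (or that the node embedding space is bounded, which holds in practice). I would also remark that the metric $d$ here is whatever ground metric is used on the node (embedding) space, and that the bound is tight in the sense that equality is approached when $\mathcal{L}(\theta, \cdot)$ aligns with the Kantorovich potential. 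Finally, I would connect the conclusion back to the motivation: since smaller $W(p(G_{<t}), q(\mathcal{B}_{<t}))$ directly tightens the gap between the empirical replay loss and the true expected loss over past tasks, a buffer selection scheme that promotes diversity — thereby making $q(\mathcal{B}_{<t})$ a better approximation of $p(G_{<t})$ and reducing the Wasserstein distance — is theoretically justified.
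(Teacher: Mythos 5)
Your proof is correct, and it is the standard argument: the coupling (primal) derivation you give — rewriting the difference of expectations as an expectation under an arbitrary coupling $\gamma$, applying the $\beta$-Lipschitz bound pointwise, and taking the infimum over couplings — is exactly how this inequality is established, and your first paragraph correctly identifies it as the Kantorovich--Rubinstein duality specialized to a $\beta$-Lipschitz test function. Note that the paper itself does not actually supply a proof of Theorem 1; its appendix section on the theorem only analyzes why the buffer covariance $\Sigma_q$ tends to underestimate $\Sigma_p$ and how that affects the Wasserstein term in the Gaussian case, so your argument fills in the omitted step rather than diverging from an existing one. Your caveats about integrability and the choice of ground metric $d$ are appropriate and, if anything, more careful than what the paper provides.
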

Assume that both \( p(G_{<t}) \) and \( q(\mathcal{B}_{<t}) \) follow Gaussian distributions with means \( \mu_p, \mu_q \) and covariance matrices \( \Sigma_p, \Sigma_q \) respectively. Thus, the squared 2-Wasserstein distance between two Gaussian distributions is given by:
\begin{equation}
\label{eq:wass}
W_2^2(\mathcal{N}(\mu_p, \Sigma_p), \mathcal{N}(\mu_q, \Sigma_q)) = \| \mu_p - \mu_q \|^2 + \operatorname{Tr}(\Sigma_p + \Sigma_q - 2(\Sigma_p^{1/2} \Sigma_q \Sigma_p^{1/2})^{1/2}).
\end{equation}
\revision{Since the \(\Sigma\) measure the distribution diversity and \(\mathcal{B}_{<t}\) is the subset of \(\mathcal{G}_{<t}\) and is typically less diverse (detailedly analyzed in Appendix \ref{th:diversity}).
Assuming the sampling strategy is unbiased upon means, as the \(\mathcal{B}_{<t}\) more diversified, \( \Sigma_q \rightarrow \Sigma_p \), leading to the Wasserstein distance decreases. Based on Theorem \ref{th:1}, the discrepancy between the expected loss under true distribution and the buffer distribution becomes less, making the optimization on the buffer more closely approximate the optimization on all previous graph data.}

\subsection{Heuristic Diversified Memory Selection}
\label{m:part1}

Based on the above motivations. For memory selection, to ensure that the selected nodes are adequately diverse with respect to the classification task, we consider the two perspectives: \textbf{P1}: \textit{the nodes within the same buffer should exhibit sufficient diversity to faithfully represent disparate regions of their corresponding areas}. Also, \textbf{P2}: \textit{the inter-class distance between nodes residing in distinct buffers should be maximized to facilitate the model to delineate clear classification boundaries}.
Thus, we introduce the concepts of intra-diversity and inter-diversity for the buffers. Our goal is to select the buffer $\mathcal{B}_i$ corresponding to the $i$-th class of training nodes based on the following criteria:
\begin{equation}
\label{eq:bs}
    \mathcal{B}_i = \arg\max\limits_{\mathcal{B}_i \subset \mathcal{C}_i} \sum\limits_{v\in \mathcal{B}_i} [\underbrace{\mathcal{A}(v, \mathcal{B}_i)}_{\text{intra-diversity}}  +  \underbrace{\frac{1}{K-1} \sum_{\substack{j=1,j\neq i,\mathcal{B}_j \subset \mathcal{C}_j}}^{K}\mathcal{A}(v, \mathcal{B}_j)}_{\text{inter-diversity}} ],
\end{equation}
\noindent where $\mathcal{C}_i$ is set of $i$-th class of training nodes,
$\mathcal{A}(v, \mathcal{B}_i, G^t)$ denotes the distance measure between node $v$ and $\mathcal{B}_i$ in the current graph $G^t$, which we define as the L2-norm distance on probabilities between node $v$ and its closest node in $\mathcal{B}_i$.
While the measure can be defined as any topological distance, such as the shortest path, we use probability distance because it offers finer resolution, reduced noise towards tasks, and computational efficiency.
The first term quantifies the intra-diversity within the buffer $ \mathcal{B}_i $, reflecting the variations among its own nodes, while the second term quantifies the inter-diversity between $ \mathcal{B}_i $ and other buffers, illustrating the differences between the nodes of $ \mathcal{B}_i $ and those belonging to other buffers.

\textbf{Heuristic Greedy Solution.} However, achieving this objective for selecting different classes of buffers is an NP-hard problem. This kind of problem is usually addressed using heuristic methods~\citep{hochbaum1996approximating}.
Thus, we introduce a greedy algorithm to sample representative training nodes when new tasks are introduced. Specifically, suppose $\mathcal{V}^t =\bigcup_{i=K+1}^{K+n}\mathcal{C}^t_i$ is the training nodes of $t$-th task and $\mathcal{C}^t_i$ is the training set corresponding to the $i$-th novel class. We have previously selected buffers $\{\mathcal{B}_{i}\}_{i=1}^{K}$ in previous $t-1$ tasks, where $K$ and $n$ are the numbers of previous classes and novel classes, respectively. Then, the greedy selection strategy is defined in the Algorithm \ref{alg:buffer},
where $\mathbb{D}(\mathcal{B}_i) = \sum_{v\in \mathcal{B}_i} (\mathcal{A}(v, \mathcal{B}_i)  +\frac{1}{K-1} \sum_{j=1,j\neq i, \mathcal{B}_j \subset \mathcal{C}_j}^{K}\mathcal{A}(v, \mathcal{B}_j) )$ is the set score function defined on the buffer set $\mathcal{B}_i$ of the $i$-th class, $\triangle_{\mathbb{D}}(v|\mathcal{B}_i)$ is the gain of $f$ choosing $v$ into $\mathcal{B}_{i}$, and $v*$ is the chosen node using the greedy strategy.
In greedy Algorithm 1, the core idea is to make the currently best choice of buffer nodes at every step, hoping to obtain the global optimal solution for the objective Eq.\ref{eq:bs} through this local optimal choice.

\begin{wrapfigure}[23]{l}{0.49\textwidth}
\vspace{-0.5cm}
 \setlength{\leftskip}{0.5em}
\begin{algorithm}[H]
\caption{Heuristic Buffer Selection}
\label{alg:buffer}
\KwIn{$\{\mathcal{B}_{i}\}_{i=1}^{K}$. // buffers of previous tasks. \\
\Indp\Indp $\{\mathcal{C}^t_i\}_{i=K+1}^{K+n}$. // training node sets of novel classes of current tasks. \\
$f(\theta)$. // Model after trained on the $(t-1)$-th task.
}
\KwOut{$\{\mathcal{B}_{i}\}_{i=1}^{K+n}$ // updated buffers.}
\tcc{Initializing.}
Create empty buffer $\{\mathcal{B}_i\}_{i=K+1}^{K+n}$.\\
\For{$i$ from $K+1$ to $K+n$} 
{Select one node with highest output probability on the $i$-th label from $\mathcal{C}^t_i$ into $\mathcal{B}_{i}$ via $f(\theta)$.} 
\tcc{Greedy Selecting.}
\Repeat{$b$ nodes are sampled in each buffer}{
\For{$i$ from $K+1$ to $K+n$} 
{$\triangle_{\mathbb{D}}(v|\mathcal{B}_i) = \mathbb{D}(\mathcal{B}_i \cup v) -  \mathbb{D}(\mathcal{B}_i),$\\
  $ v^* = \arg\max_{\mathcal{C}_{i} \textbackslash \mathcal{B}_{i}}\triangle_{\mathbb{D}}(v|\mathcal{B}_i)$.\\
  Add $v^*$ into $\mathcal{B}_{i}$.
  }
}
\end{algorithm}
\end{wrapfigure}

Below, we give a Proposition of approximation guarantee of our greedy algorithm.
\begin{prop}
\label{prop1}
\textbf{(Greedy Approximation Guarantee of Algorithm 1).}
\textit{The greedy Algorithm 1 that sequentially adds elements to an initially empty set based on the largest marginal gain $\triangle_{\mathbb{D}}$ under a cardinality constraint provides a solution $\mathcal{B}^*_{i}$ that is at least $(1-\frac{1}{e})$ times the optimal solution, i.e.,}
\begin{equation}
 f(\mathcal{B}^*_{i}) \geq \left(1 - \frac{1}{e}\right) \cdot f(OPT),
\end{equation}
\noindent\textit{where \( OPT \) represents the optimal solution of the buffer set $\mathcal{B}_{i}$.}
\end{prop}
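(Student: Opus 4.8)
The plan is to read Algorithm~1 as the textbook greedy procedure for maximizing a normalized, monotone, submodular set function under a cardinality constraint, and then to quote the classical $(1-1/e)$ guarantee. Fix the class index $i$; while Algorithm~1 selects $\mathcal{B}_i$ it treats the other buffers $\{\mathcal{B}_j\}_{j\neq i}$ as a frozen reference collection when it evaluates the marginal gain $\triangle_{\mathbb{D}}(v\mid\mathcal{B}_i)=\mathbb{D}(\mathcal{B}_i\cup v)-\mathbb{D}(\mathcal{B}_i)$, so we may regard $f:=\mathbb{D}$ as a set function of the single argument $\mathcal{B}_i\subseteq\mathcal{C}_i$. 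It then suffices to check that $f$ is (i) normalized, (ii) monotone non-decreasing, and (iii) submodular, and to feed these into the standard induction; the proposition is exactly the conclusion of that induction with $\mathcal{B}^*_i$ the greedy output.

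Property (i) is immediate since $\mathbb{D}(\emptyset)=0$ (the single seed node placed in the initialization phase is absorbed into a constant offset or, equivalently, the induction is started from the first greedy iterate). For (ii)--(iii) I would split $f=f_{\mathrm{inter}}+f_{\mathrm{intra}}$. With the other buffers frozen, $f_{\mathrm{inter}}(\mathcal{B}_i)=\frac{1}{K-1}\sum_{v\in\mathcal{B}_i}\sum_{j\neq i}\mathcal{A}(v,\mathcal{B}_j)$ is additive over the selected nodes, hence modular, and contributes a fixed non-negative amount per inserted node, so it is both monotone and submodular. The substantive part is $f_{\mathrm{intra}}(\mathcal{B}_i)=\sum_{v\in\mathcal{B}_i}\mathcal{A}(v,\mathcal{B}_i)$ with $\mathcal{A}(v,\mathcal{B}_i)$ the distance from $v$ to its nearest other node in $\mathcal{B}_i$: inserting a node adds a new non-negative nearest-neighbour term but can only weakly shrink the nearest-neighbour terms attached to nodes already in the buffer. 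I would verify the diminishing-returns inequality $f(A\cup\{x\})-f(A)\ge f(B\cup\{x\})-f(B)$ for $A\subseteq B$, $x\notin B$, and monotonicity of $f$, directly from this pointwise-minimum structure; controlling the simultaneous weak decrease of the existing terms is the one genuinely delicate point, and if it cannot be made to hold verbatim one can recover the hypotheses by measuring $\mathcal{A}(v,\cdot)$ against a fixed candidate pool rather than against the running buffer, which renders $f$ modular-plus-monotone. I expect this verification --- establishing the monotone submodularity of the diversity score --- to be the main obstacle; the remainder is mechanical.

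Granting (i)--(iii), the bound follows by the usual unrolling. Let $S_0\subseteq S_1\subseteq\cdots\subseteq S_b=\mathcal{B}^*_i$ be the nested greedy iterates and let $\mathrm{OPT}$ be an optimal $b$-subset. Applying monotonicity, then submodularity (summing telescoped marginal gains), then the greedy choice, for every $k$,
\begin{align*}
f(\mathrm{OPT}) &\le f(\mathrm{OPT}\cup S_k)\le f(S_k)+\sum_{o\in \mathrm{OPT}\setminus S_k}\bigl(f(S_k\cup\{o\})-f(S_k)\bigr) \\
&\le f(S_k)+b\,\bigl(f(S_{k+1})-f(S_k)\bigr),
\end{align*}
where the last step uses $|\mathrm{OPT}\setminus S_k|\le b$ and that $S_{k+1}$ adjoins the element of largest marginal gain. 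Setting $\delta_k=f(\mathrm{OPT})-f(S_k)\ge 0$ this rearranges to $\delta_{k+1}\le(1-\tfrac1b)\delta_k$, hence $\delta_b\le(1-\tfrac1b)^b\delta_0\le e^{-1}f(\mathrm{OPT})$, i.e. $f(\mathcal{B}^*_i)\ge(1-\tfrac1e)f(\mathrm{OPT})$, as claimed.
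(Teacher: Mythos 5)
Your proposal follows the same route as the paper: regard $\mathbb{D}$, with the other buffers held fixed, as a set function of the single argument $\mathcal{B}_i$, verify that it is monotone and submodular, invoke the classical Nemhauser--Wolsey--Fisher guarantee, and unroll the standard induction to obtain the $(1-\frac{1}{e})$ factor --- the paper does exactly this via its Theorem~\ref{theorem1} together with Lemmas~\ref{lemma1} and~\ref{lemma2}, and your final unrolling paragraph reproduces the proof of that theorem correctly. The one step you single out as genuinely delicate --- that the intra-class term $\sum_{v\in\mathcal{B}_i}\mathcal{A}(v,\mathcal{B}_i)$, a sum of nearest-neighbour distances measured within the running buffer, is not obviously monotone or submodular because inserting a node can shrink the terms attached to nodes already present --- is indeed the crux, and the paper does not resolve it any more rigorously than you do: its Lemma~\ref{lemma1} concedes that the intra-class contribution may decrease and argues only that the inter-class gains ``generally overshadow'' the potential losses, and Lemma~\ref{lemma2} treats the same term with a similarly informal comparison of nearest-neighbour distances. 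So your caution is warranted rather than a defect relative to the paper, and your proposed repair --- evaluating $\mathcal{A}(v,\cdot)$ against a fixed candidate pool so that the objective becomes modular plus a genuinely monotone part --- is one clean way to make the hypotheses of the classical theorem hold verbatim, at the cost of slightly changing the objective actually optimized.
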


\begin{proof}
The above Proposition can be derived from the Greedy Approximation Guarantee for Monotonic and Submodular Functions~\citep{nemhauser1978analysis} (proof can be found in Theorem \ref{theorem1} in the Appendix), given that our function $f$ is both monotonic (from Lemma \ref{lemma1}) and submodular (from Lemma \ref{lemma2}). The greedy algorithm is guaranteed to produce a solution that is at least $(1-\frac{1}{e})$ times the optimal solution.
\end{proof}

\textbf{Time Complexity Analysis.} 
For each buffer of the $t$-th task, there are $b$ sampling steps where $b$ is the size of the buffers. Each sampling can be done in $O((K+n)*|\mathcal{C}^t_{i}|)$, by determining distances and making comparisons. Thus, the overall complexity of selecting each buffer is $O(b(K+n)*|\mathcal{C}^t_{i}|)$. Note that $b$ (the buffer size) and $K+n$ (the total number of classes up to $t$-th task) are typically much smaller than $|\mathcal{C}^t_{i}|$, ensuring the efficiency of the algorithm.

\subsection{Diversified Memory Generative Replay}
\label{m:part2}
During training for the $t$-th task $\mathcal{T}^t$, the stored representative nodes in the buffers $\{\mathcal{B}_i\}_{i=1}^K$ from previous $t-1$ tasks are also recalled to reinforce what the model has previously learned, known as memory replay. However, the limited buffer size still presents challenges: \textbf{C1}: \textit{the stored knowledge may be constrained and might not encompass the full complexity of previous tasks, leading to a potential bias in replaying}, and \textbf{C2}: \textit{the training process can become difficult as the model may easily overfit to the limited nodes in the buffers,  undermining its ability to generalize across different tasks}.
Thus, we proposed the Diversified Memory Generation Replay to address the above problems.

\textbf{Broadening Diversity of Buffer Node Embeddings}. Specifically, the embeddings of the buffer nodes are first subjected to a variational layer, which aims to create more nuanced representations that encapsulate the inherent probabilistic characteristics of nodes. Let $z_i\in \mathbb{R}^h$ denote the embedding of node $v_i\in \mathcal{B}_j, 1\leq j\leq K$, where $h$ is the hidden dimension.
Specifically, we treat the nodes in the buffers as the observed samples $\mathcal{V}_{ob}$ drawn from the ground-truth distribution of the previous nodes:
\begin{equation}
     Z = \bigcup_{j=1}^{K}\bigcup_{v_i\in \mathcal{B}_j} \{z_i\} \in \mathbb{R}^{(K\times b)\times h} \xlongequal{\text{def}}\mathcal{V}_{ob}.
\end{equation}
The node variable $\widehat{Z}$ is drawn from the variational network layer $q_{\phi_v}(\widehat{Z}|Z)$ with parameters $\phi_v$. Specifically, $q_{\phi_v}$ outputs the mean and variance of the node embeddings distributions respectively, expressed as $Z^{\mu} = q_{\phi_v}^\mu(Z)$ and $Z^{\sigma} = q_{\phi_v}^\sigma(Z)$, where $q_{\phi_v}^\mu(\cdot)$ is the identity function and $q_{\phi_v}^\sigma(\cdot)$ is a Liner layer followed with a Relu activation layer. Then we use the reparameterization technique to sample from $\mathcal{N}(Z^{\mu},Z^{\sigma})$, expressed as $\widehat{Z} = Z^{\mu} + Z^{\sigma} \odot \epsilon$, where $\widehat{Z}\in \mathbb{R}^h$ and $\epsilon$ is drawn from standard normal distributions. 
Thus, we define the generated samples as $\mathcal{V}_{ge}$.
The variational operation augments the diversity of observed buffer nodes $\mathcal{V}_{ob}$, empowering the model to explore more expansive distribution spaces.

\textbf{Maintaining Integrity of Synthesized Embeddings.}
Then, we further propose to maintain the integrity of these variational node embeddings to prevent them from deviating too far, which implies that while the generated embeddings should exhibit diversity, they must remain similar to the ground-truth ones.
Specifically, we adopt an adversarial learning strategy.  
We introduce an auxiliary discriminator, $\mathcal{D}: \mathbb{R}^h \rightarrow \mathbb{R}^1$ with parameters $\phi_d$, tasked with distinguishing the original embeddings and the variational embeddings of nodes.
In contrast, the model is learned to generate diversified variational node embeddings from the original ones, meanwhile ensuring the authenticity of synthesized variational node embeddings.  
Thus, the learning objective is defined as follows:
\begin{equation}
\label{eq:mise}
\begin{aligned}
    \mathcal{L}_{MISE} =\min_{\theta,\phi_v}\mathbb{E}_{z_i \sim Z}\mathbb{E}_{q_{\phi_v}(\hat{z}_i|z)}  &\left[\max_{\phi_d} \text{ } \ell_{\mathcal{D}}(z_i, \hat{z}_i)  \vphantom{\max_{\phi_d}}\right],
\end{aligned}
\end{equation}
where $\ell_{\mathcal{D}}$ is a negative binary cross-entropy loss function on the variational and original node embeddings, defined as:
\begin{equation}
    \ell_{\mathcal{D}}(z_i, \hat{z}_i) =\log\mathcal{D}(z_i, \phi_d) + \log\left(1 - \mathcal{D}(\hat{z}_i, \phi_d)\right).
\end{equation}
The min-max adversarial learning strategy involves training the domain discriminator to distinguish whether the node embeddings are synthesized or original while simultaneously enforcing a constraint on the model to generate indistinguishable node embeddings from the domain discriminator. This interplay aims to yield synthesized node embeddings that are more comprehensive and maintain integrity. It leverages the strengths of generative methods for increased representational complexity of buffer nodes to address \hyperref[m:part2]{\textbf{C1}}. Simultaneously, it employs adversarial learning and regularization to ensure this expansion does not lead to distortions.

\textbf{Consolidating Generalization of Synthesized Embeddings.}
Furthermore, to adequately capture the node relationships within the variational node embeddings, we use the variational node embeddings to generate a reconstructed graph on the buffer nodes.
As the buffer nodes are sampled from disparate regions, and the initial connections between them are sparse, we instead employ the ground-truth label to build the reconstructed graph. 
Specifically, nodes sharing the same labels are linked, while those with different labels are not connected.
The reconstructed graph is denoted as $\widehat{A}\in \mathbb{R}^{Kb\times Kb}$ and the decoder loss is defined as:
\begin{equation}
\label{eq:cgse}
\begin{aligned}
    \mathcal{L}_{CGSE} &=-\mathbb{E}_{q_{\phi_v}(\widehat{Z}|Z)}[\log p(\widehat{A}|\widehat{Z})] + \text{KL}(q_{\phi_v}(\hat{z}_i|z)||p(\hat{z}_i))\\
    & = - \mathbb{E}_{\hat{z}_i,\hat{z}_j \sim \widehat{Z}} \left[\widehat{A}_{ij}\log \widehat{p}_{ij} + \widehat{A}_{ij}\log (1 - \widehat{p}_{ij})\right]+ \text{KL}(q_{\phi_v}(\hat{z}_i|z)||p(\hat{z}_i)),
\end{aligned}
\end{equation}
where $p(\widehat{A}|\widehat{Z})$ is the probability of reconstructing $\widehat{A}$ given the latent variational node embedding matrix $\widehat{Z}$, following a Bernoulli distribution. $\widehat{p}_{ij}$ is the probability of an edge between nodes $i$ and $j$, defined as: $\widehat{p}_{ij} = \text{sigmoid}(\widehat{z}_i^T \cdot\widehat{z}_j)$. The second term in Eq. \ref{eq:cgse} is a distribution regularization term, which enforces the variational distribution $q_{\phi_v}(\hat{z}_i|z)$ of each node to be close to a prior distribution $p(\hat{z}_i)$, which we assume is a standard Gaussian distribution. $\text{KL}(\cdot)$ represents the Kullback-Leibler (KL) divergence. The detailed derivation of $\mathcal{L}_{CGSE}$ is in the Appendix.

The reconstruction objective incorporates both the inter-class and intra-class relationships between nodes in buffers. This loss facilitates the learning of variational node embeddings with well-defined classification boundaries, further bolstering the model's generalization on the label space. As such, the method can effectively address \hyperref[m:part2]{\textbf{C2}}.

\textbf{Replaying on Generated Diversified Memory.}
Finally, we define the reply objective on the variational embeddings of buffer nodes, rather than the original embeddings, expressed as:
\begin{equation}
    \mathcal{L}_{RP} = \sum_{j=1}^{K}\sum_{v_i \in \mathcal{B}_j} \ell^t (\hat{z_i}, y_i).
\end{equation}
Note that the variational operation regenerates synthetic buffer node embeddings with the same size as original embeddings in each training step, i.e., $|\mathcal{V}_{ge}|\equiv |\mathcal{V}_{ob}|$, broadening the diversity of memory while guaranteeing the efficiency of the buffer replay.

\subsection{Overall Optimization}
Combining the new task loss $\mathcal{L}^{t} = \sum_{v_i \in \mathcal{V}^t} \ell^t (f(v_i;\theta), y_i)$ and the above memory replay losses, the overall optimization objective can be written as follows: 
\begin{equation}
\label{eq:overall}
     \min_{\theta} \mathcal{L}^{t} +  \min_{\theta, \phi_v} \left\{ \lambda_1 \mathcal{L}_{RP} + \lambda_2 \max_{\phi_d} \{\mathcal{L}_{MISE}\} + \lambda_3 \mathcal{L}_{CGSE} \right\}, 
\end{equation}
where $\lambda_1$, $\lambda_2$ and $\lambda_3$ is the loss weights. 

\textbf{Synchronized Min-Max Optimization.}
\revision{A gradient reversal layer (GRL)~\citep{ganin2016domain} is introduced between the variational embedding and the auxiliary discriminator so as to conveniently perform min-max optimization on $\theta, \phi_v$ and $\phi_d$ under $\mathcal{L}_{MISE}$ in the same training step.} GRL acts as an identity transformation during the forward propagation and changes the signs of the gradient from the subsequent networks during the backpropagation.

\textbf{Scaliability on Large-Scale graphs.}
In each training step, training \method{} on the entire graph and buffer at once may not be practical, especially for large-scale graphs.  
Following methods like GraphSAGE~\citep{hamilton2017inductive} and GraphSAINT~\citep{zeng2019graphsaint}, we adopt a mini-batch optimization strategy. \revision{We sample a multi-hop neighborhood for each node and set two kinds of batch sizes, $B^{\text{new}}$ and $B^{\text{buffer}}$, for the new task and replay losses in Eq. \ref{eq:overall}, respectively.} The ratio of these batch sizes corresponds to the ratio between the total training nodes in the new task and the buffer.

\section{Experiments}

\textbf{Experiment Setup.} In this section, we describe the experiments we perform to validate our proposed method.
We use the four growing graph datasets, CoraFull, OGB-Arxiv, Reddit, and OGB-Products, introduced in Continual Graph Learning Benchmark (CGLB)~\citep{zhang2022cglb}. 
These graphs contain 35, 20, 20, and 23 sub-graphs, respectively, where each sub-graph corresponds to new tasks with novel classes. 
For baselines, we establish the upper bound baseline \textbf{Joint} defined in Section \ref{sec:problem}. The lower bound baseline \textbf{Fine-tune} employs only the newly arrived training nodes for model adaptation without memory replay.
Then, we set multiple continual learning models for the graph as baselines, including EWC~\citep{kirkpatrick2017overcoming}, MAS~\citep{aljundi2018memory}, GEM~\citep{lopez2017gradient},  LwF~\citep{li2017learning}, TWP~\citep{liu2021overcoming}, ER-GNN~\citep{zhou2021overcoming}, SSM~\citep{zhang2022sparsified}, and SEM~\citep{zhang2023ricci}. 
For a fair comparison, the backbone is set as two layers and the hidden dimension as 256 for all baselines. 
For evaluation metric, we first report the accuracy matrix $M^{acc}\in \mathbb{R}^{T\times T}$, which is lower triangular where $M^{acc}_{i,j} (i\geq j)$ represents the accuracy on the $j$-th tasks after learning the task $i$. To derive a single numeric value after learning all tasks, we report the Average Accuracy (AA) $\frac{1}{T}\sum_{i=1}^T M^{acc}_{T,i}$ and the Average Forgetting (AF) $\frac{1}{T-1}\sum_{i=1}^{T-1} M^{acc}_{T,i} - M^{acc}_{i,i}$ for each task after learning the last task. For a more detailed introduction to the experimental setup, please refer to Appendix \ref{exp_setup}.

\textbf{Overall Comparison.}
This experiment aims to answer: \textit{How is \method{}'s performance on the continual learning on graphs?}
We compare \method{} with various baselines in the class-incremental continual learning task and report the experimental results in Table \ref{tab:results}.
Initially, we observe that \method{} attains a significant margin over other baseline methods across all datasets. 
Certain baseline methods demonstrate exceedingly poor results. This can be attributed to the difficulty of the problem, which involves more than 20 timesteps' continual learning. When the model forgets intermediate tasks, errors are cumulatively compounded for subsequent tasks, potentially leading to the model easily collapsing. However, our model addresses this challenge through superior buffer selection and replay training strategies, effectively avoiding catastrophic problems.
Among the various baselines, the most comparable method to \method{} is SEM. Our method outperforms SEM mainly because we improve the buffer selection strategy, i.e., instead of random selection, we employ distance measures to choose more representative nodes for each class. Additionally, the variational replay method enables our model to effectively learn the data distribution from previous tasks.
When compared to Joint, \method{} achieves comparable results, demonstrating its effectiveness in preserving knowledge from previous tasks, even with limited training samples. Notably, our method outperforms Joint on the Reddit dataset, and also exhibits a positive AF. This improvement can be attributed to the proposed buffer selection strategy, which can select representative nodes, in other words, eliminate the noise nodes, and thereby enhance the results.

\begin{table}[htbp]
    \caption{The model performance comparisons( $\uparrow$: higher is better, Joint is the upper bound).}
    \vspace{-0.2cm}
    \label{tab:results}
    \centering
    \scalebox{0.74}{%
    \begin{tabular}{c|cc|cc|cc|cc}
    \toprule
    \multirow{2}{*}{\textbf{Methods}}  & \multicolumn{2}{c|}{CoreFull} & \multicolumn{2}{c|}{OGB-Arxiv} & \multicolumn{2}{c|}{Reddit} & \multicolumn{2}{c}{OGB-Products} \\
    \cmidrule{2-9}
         & AA/\% $\uparrow$&AF/\% $\uparrow$ & AA/\% $\uparrow$&AF/\% $\uparrow$ & AA/\% $\uparrow$&AF/\% $\uparrow$ & AA/\% $\uparrow$&AF/\% $\uparrow$ \\
    \midrule
    Fine-tune & 3.5$\pm$0.5&-95.2$\pm$0.5 & 4.9$\pm$0.0&-89.7$\pm$0.4 &  5.9$\pm$1.2&-97.9$\pm$3.3 & 3.4$\pm$0.8&-82.5$\pm$0.8 \\
    EWC & 52.6$\pm$8.2&-38.5$\pm$12.1 & 8.5$\pm$1.0&-69.5$\pm$8.0 & 10.3$\pm$11.6&-33.2$\pm$26.1 & 23.8$\pm$3.8&-21.7$\pm$7.5 \\
    MAS & 12.3$\pm$3.8&-83.7$\pm$4.1 & 4.9$\pm$0.0&-86.8$\pm$0.6 & 13.1$\pm$2.6&-35.2$\pm$3.5 & 16.7$\pm$4.8&-57.0$\pm$31.9 \\
    GEM & 8.4$\pm$1.1&-88.4$\pm$1.4 & 4.9$\pm$0.0&-89.8$\pm$0.3 & 28.4$\pm$3.5&-71.9$\pm$4.2 & 5.5$\pm$0.7&-84.3$\pm$0.9 \\
    TWP & 62.6$\pm$2.2 &-30.6$\pm$4.3 & 6.7$\pm$1.5&-50.6$\pm$13.2 & 13.5$\pm$2.6&-89.7$\pm$2.7 & 14.1$\pm$4.0&-11.4$\pm$2.0 \\
    LwF & 33.4$\pm$1.6&-59.6$\pm$2.2 & 9.9$\pm$12.1&-43.6$\pm$11.9 & 86.6$\pm$1.1&-9.2$\pm$1.1 & 48.2$\pm$1.6&-18.6$\pm$1.6 \\
    ER-GNN & 34.5$\pm$4.4&-61.6$\pm$4.3 & 30.3$\pm$1.5&-54.0$\pm$1.3 & 88.5$\pm$2.3&-10.8$\pm$2.4 & 56.7$\pm$0.3&-33.3$\pm$0.5 \\
    SSM  & 75.4$\pm$0.1 &-9.7$\pm$0.0& 48.3$\pm$0.5 &-10.7$\pm$0.3 &94.4$\pm$0.0& -1.3$\pm$0.0& 63.3$\pm$0.1 &-9.6$\pm$0.3 \\
    SEM  & 77.7$\pm$0.8 & -10.0$\pm$1.2 & 49.9$\pm$0.6 & -8.4$\pm$1.3 & 96.3$\pm$0.1 & -0.6$\pm$0.1 & 65.1$\pm$1.0 & -9.5$\pm$0.8 \\
    \midrule
    Joint & 81.2$\pm$0.4 & -3.3$\pm$0.8 & 51.3$\pm$0.5 & -6.7$\pm$0.5 & 97.1$\pm$0.1&-0.7$\pm$0.1 & 71.5$\pm$0.1 & -5.8$\pm$0.3 \\
    \midrule 
    \rowcolor{gray!30} 
    \method{} & \textbf{77.8$\pm$0.3} &\textbf{-0.5$\pm$0.5} & \textbf{50.7$\pm$0.4}&\textbf{-1.9$\pm$1.0} & \textbf{98.1$\pm$0.0} & \textbf{0.9$\pm$0.1} & \textbf{66.0$\pm$0.4}&\textbf{-0.9$\pm$1.6} \\
    \bottomrule
    \end{tabular}}
    \vspace{-0.4cm}
\end{table}

\begin{figure*}[ht]
    \centering
    \begin{subfigure}[b]{0.23\textwidth}
        \includegraphics[width=\textwidth]{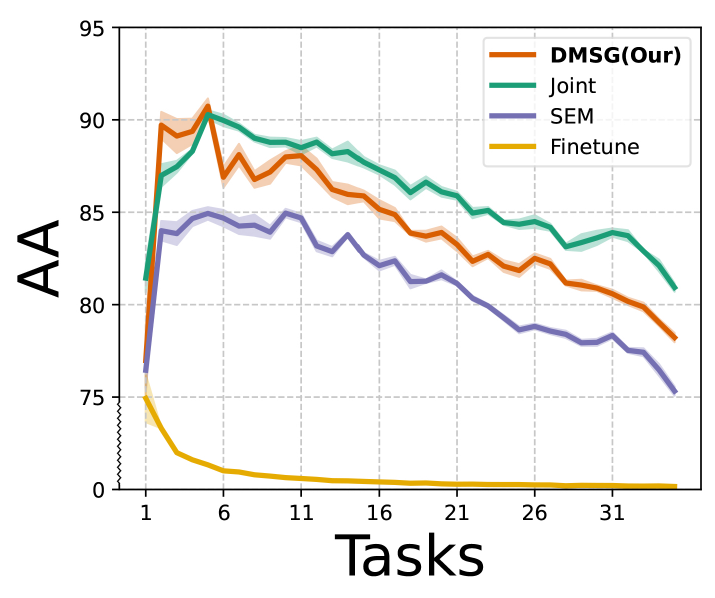}
        \captionsetup{skip=0pt} 
        \caption{CoraFull}
        \label{fig:corafull_nodes}
    \end{subfigure}
    \hfill\hspace{-0.2cm}
    \begin{subfigure}[b]{0.23\textwidth}
        \includegraphics[width=\textwidth]{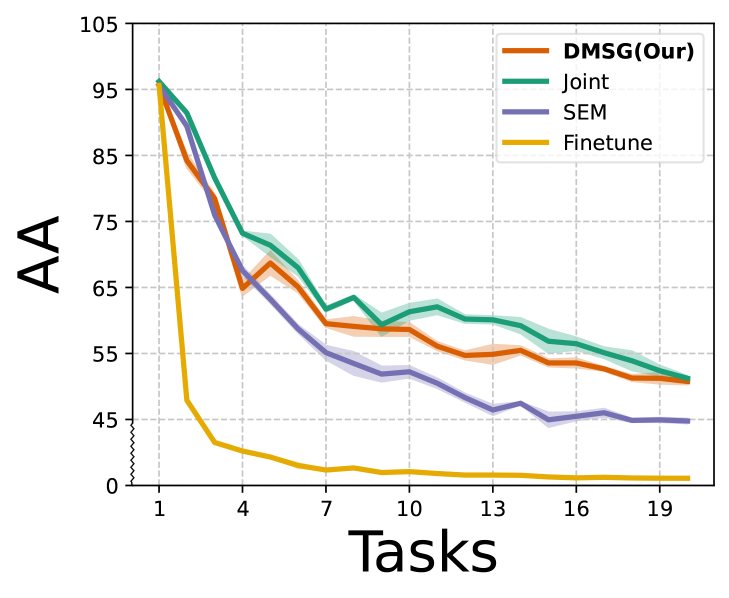}
        \captionsetup{skip=0pt}
        \caption{OGB-Arxiv}
        \label{fig:arxiv_nodes}
    \end{subfigure}
    \hfill\hspace{-0.2cm}
    \begin{subfigure}[b]{0.23\textwidth}
        \includegraphics[width=\textwidth]{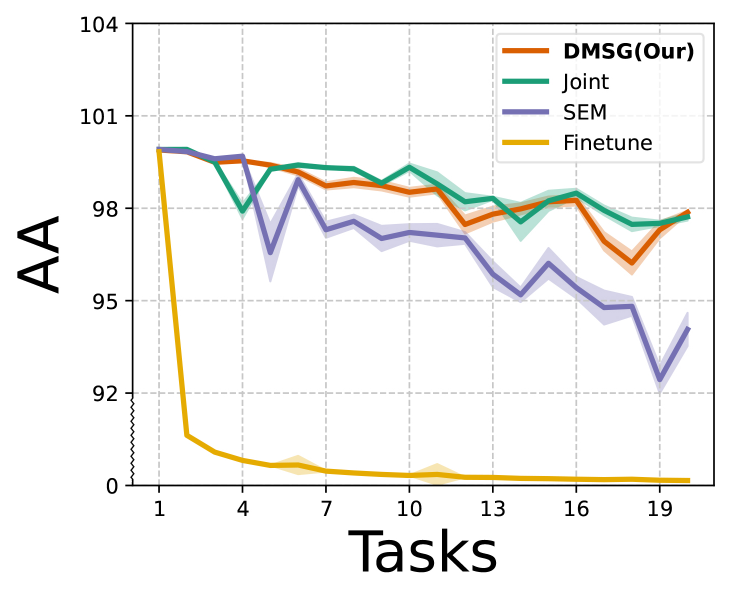}
        \captionsetup{skip=0pt} 
        \caption{Reddit}
        \label{fig:reddit_nodes}
    \end{subfigure}
    \hfill\hspace{-0.2cm}
    \begin{subfigure}[b]{0.23\textwidth}
        \includegraphics[width=\textwidth]{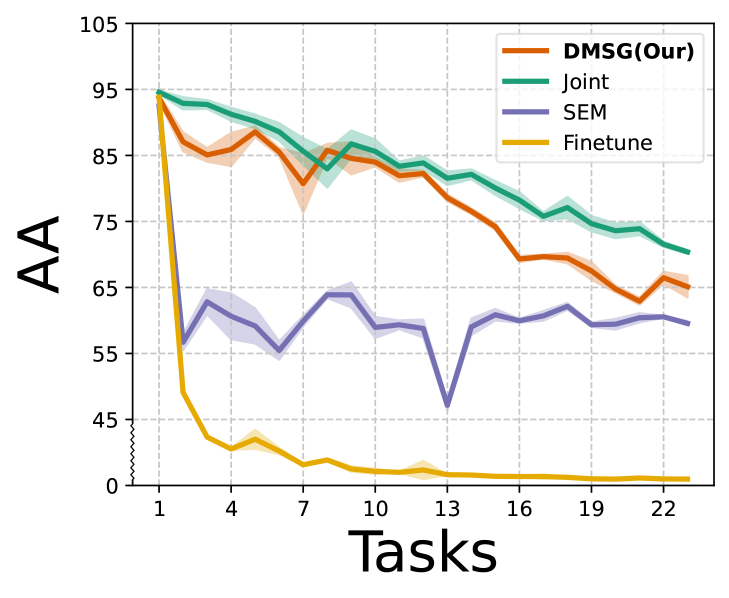}
        \captionsetup{skip=0pt} 
        \caption{OGB-Product}
        \label{fig:product_nodes}
    \end{subfigure}
    \vspace{-0.2cm}
    \caption{Dynamics of the average accuracy during incremental learning on different growing graphs.}
    \vspace{-0.6cm}
    \label{fig:CL}
\end{figure*}

\textbf{In-Depth Analysis of Continuous Performance.}
This experiment aims to answer: \textit{How does \method{}'s fine-grained performance evolve after continuously learning each task?}
To present a more fine-grained demonstration of the model's performance in continual learning on graphs, we analyzed the average performance across all previous tasks each time a new task was learned. The comparative results of Fine-tune, Joint, \method{}, and the top-performing baseline, SEM, are depicted in Figure \ref{fig:CL}.
The curve represents the model's performance after $t$ in terms of AA on all previous $t$ tasks.
Also, we visualize the accuracy matrices of \method{} and SEM on the OGB-Arxiv and OGB-Product datasets. 
The results are presented in Figure \ref{fig:am}.
In these matrices, each row represents the performance across all tasks upon learning a new one, while each column captures the evolving performance of a specific task as all tasks are learned sequentially.
In the visual representation, lighter shades signify better performance, while darker hues indicate inferior outcomes.
From the results, we observed that as the number of tasks increases, the learning objectives grow increasingly complex, resulting in a reduction in performance across all examined methods, including Joint. That is because as tasks accumulate and the learning objectives become multifaceted, it becomes challenging for models to maintain optimal performance across all classes. 
Notably, the Fine-Tuning strategy experienced a substantial decline, with the model collapsing with the arrival of merely two new tasks, demonstrating that catastrophic forgetting occurs almost immediately when the model fails to access previous memories. This reinforces the need for effective continual learning techniques on the growing graphs where new tasks frequently emerge. 
While the performance drop was observed across all methods, \method{} demonstrated resilience and outperformed the top-performing baseline SEM. Also, \method{} predominantly displays lighter shades across the majority of blocks compared to SEM in Figure \ref{fig:am}. Moreover, its competitive performance with Joint in specific datasets signifies its robustness and capability. This could be attributed to diversified memory selection and generation in \method{} that not only help in mitigating forgetting but also in adapting efficiently to new tasks.

\begin{figure*}[t]
    \centering
        \begin{subfigure}[b]{0.22\textwidth}
        \includegraphics[width=\textwidth]{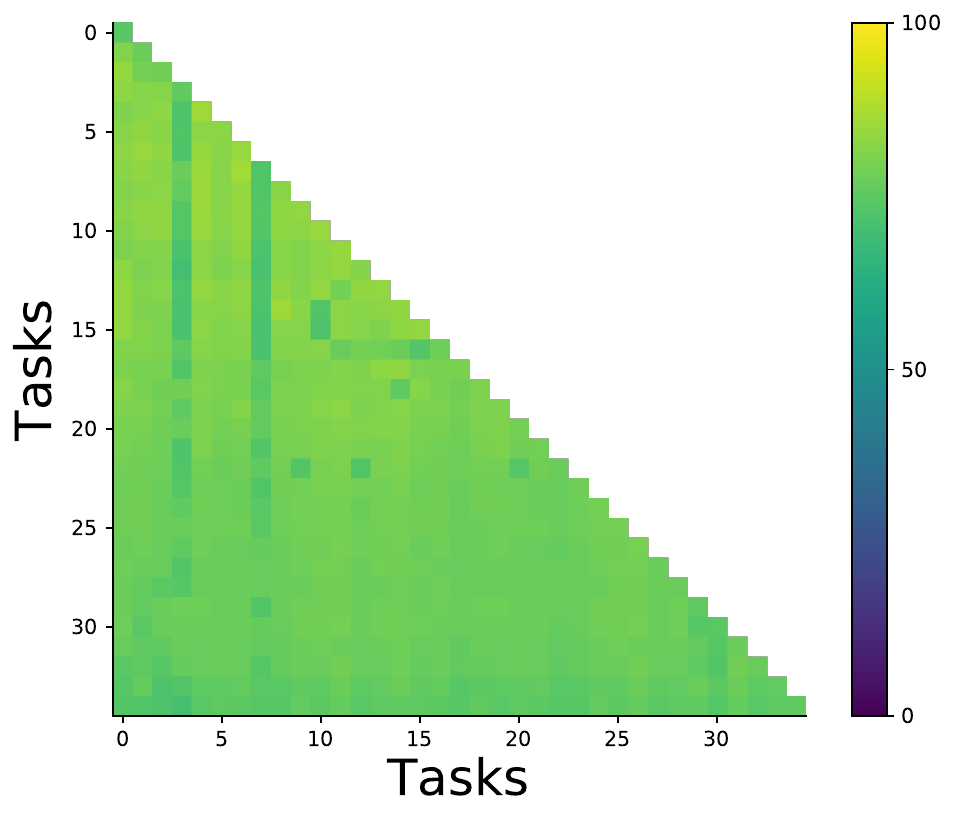}
        \vspace{-0.5cm}
        \caption{CoraFull (SEM)}
        \label{fig:corafull_martix}
    \end{subfigure}
    \hfill
        \begin{subfigure}[b]{0.22\textwidth}
        \includegraphics[width=\textwidth]{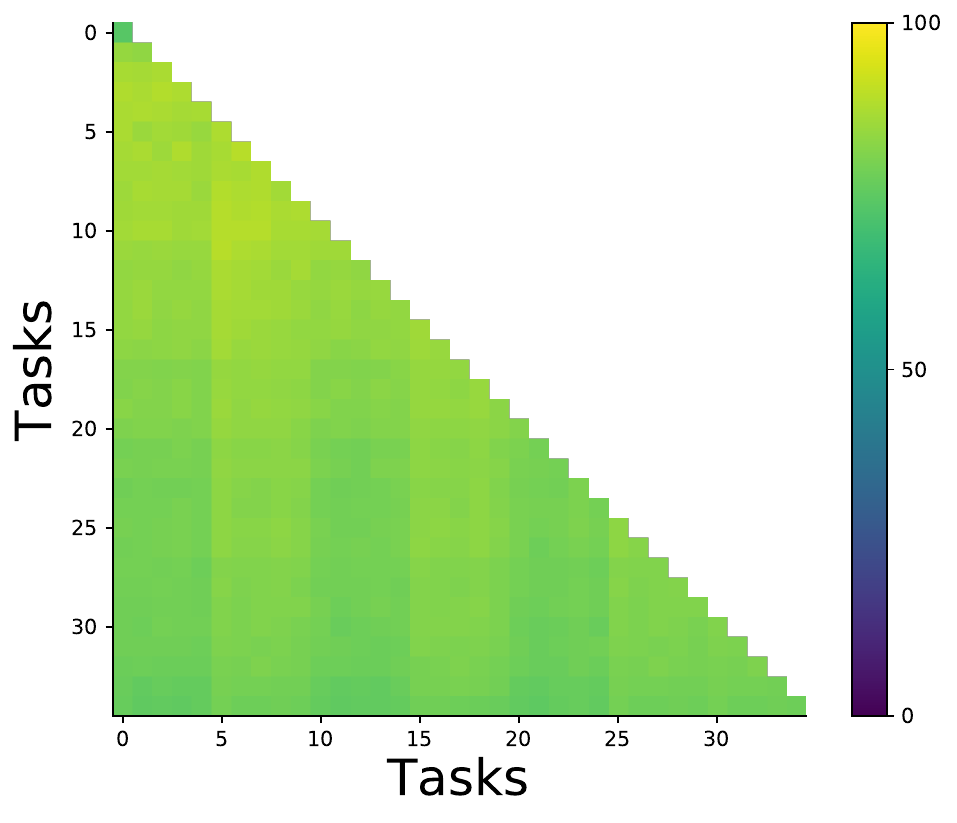}
        \vspace{-0.5cm}
        \caption{CoraFull (\method{})}
        \label{fig:corafull_martix}
    \end{subfigure}
    \hfill
    \begin{subfigure}[b]{0.22\textwidth}
        \includegraphics[width=\textwidth]{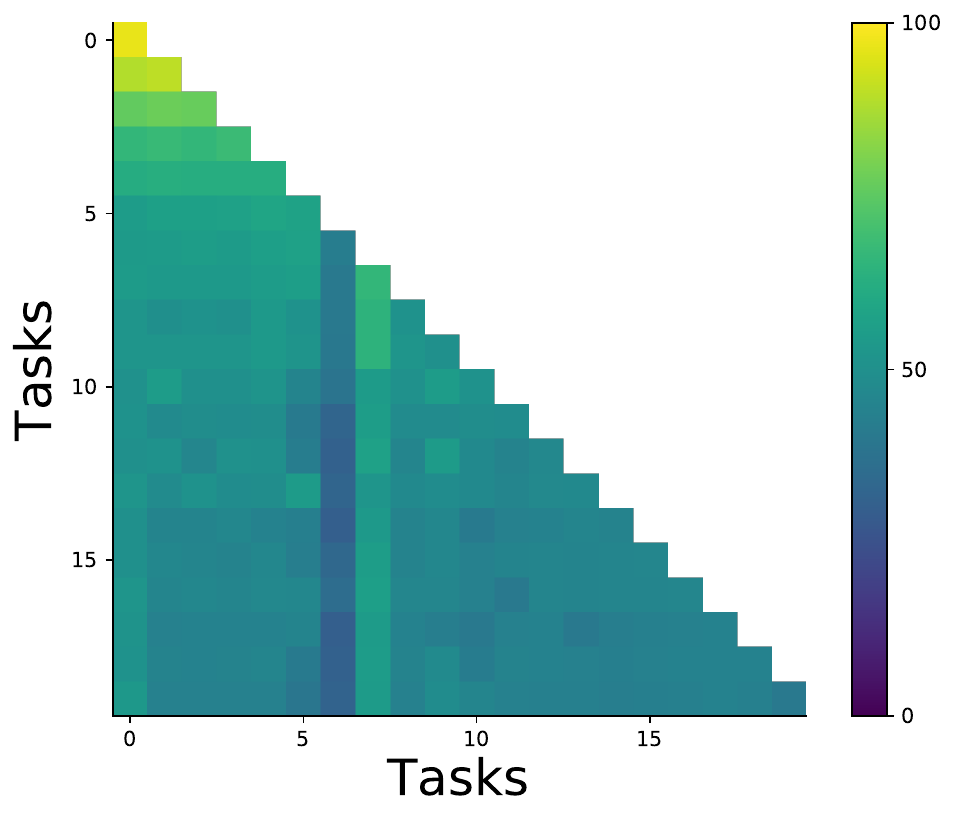}
        \vspace{-0.5cm}
        \caption{OGB-Arxiv (SEM)}
        \label{fig:arxiv_martix}
    \end{subfigure}
    \hfill
    \begin{subfigure}[b]{0.22\textwidth}
        \includegraphics[width=\textwidth]{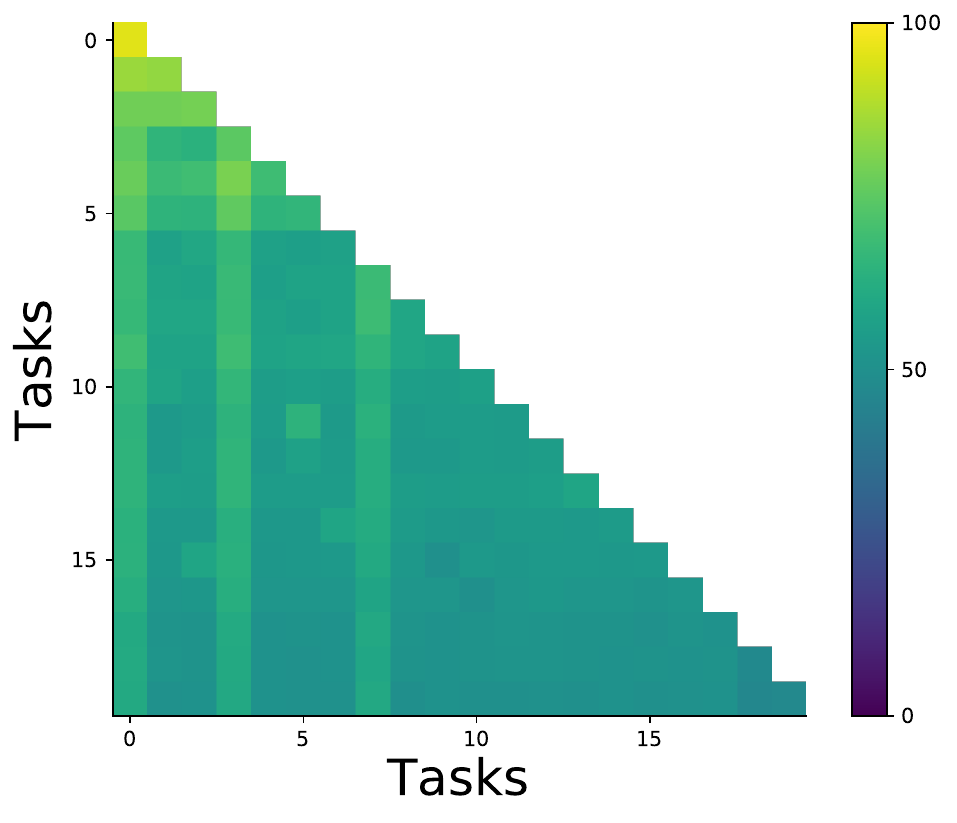}
        \vspace{-0.5cm}
        \caption{OGB-Arxiv (\method{})}
        \label{fig:corafull_martix}
    \end{subfigure}
    \vspace{-0.2cm}
    \caption{Accuracy matrices of \method{} and SEM in different datasets.}
    \vspace{-0.4cm}
    \label{fig:am}
\end{figure*}

\textbf{Component Analysis of Memory Replay.} 
This experiment aims to answer: \textit{Are all the proposed memory replay technologies of \method{} have the claimed contribution to continual learning?}
To investigate the distinct contributions of the diversified memory generation replay method, we conducted an ablation study on it. 
We design three variant methods for \method{}---
\textbf{w/o $\mathcal{L}_{MISE}$}: This variant excludes the adversarial learning loss for maintaining the integrity of synthesized embeddings; \textbf{w/o $\mathcal{L}_{CGSE}$}: This variant excludes the graph reconstruction loss on variational embeddings for consolidating their generalization to label space; 
\textbf{w/o} all: both losses were removed, and as a result,  the model operates without any variational embeddings and just replays the original nodes in the memory buffers. 
From the results in Table \ref{tab:ablation}, we can observe when both $\mathcal{L}_{MISE}$ and $\mathcal{L}_{CGSE}$ are removed (w/o all), the performance is the lowest across all datasets. 
Also, a progressive improvement is observed as individual components in the model. 
This confirms their respective contributions to continual learning.
An interesting trend emerges when comparing the individual contributions of the two components. 
\begin{wraptable}[8]{r}{0.55\textwidth}
    \centering
    \caption{The ablation study of memory replay.}
    \vspace{-0.2cm}
    \scalebox{0.83}{
    \begin{tabular}{c|cccc}
    \toprule
    \textbf{Methods} & CoreFull & \makecell[c]{OGB-\\Arxiv} & Reddit & \makecell[c]{OGB-\\Products} \\
    \midrule
    w/o all  & 73.9$\pm$0.6 & 48.2$\pm$0.4 & 89.3$\pm$3.6 & 60.1$\pm$0.8 \\ 
    w/o $\mathcal{L}_{MISE}$ & 74.4$\pm$0.8 & 49.3$\pm$0.3 & 95.1$\pm$2.9 & 60.1$\pm$0.9 \\
    w/o $\mathcal{L}_{CGSE}$ & 74.8$\pm$0.7 & 49.7$\pm$0.2 & 97.8$\pm$0.4 & 60.5$\pm$0.7 \\
    \rowcolor{gray!30} 
    \method{} & \textbf{77.8$\pm$0.3} & \textbf{50.7$\pm$0.4} & \textbf{98.1$\pm$0.0} &\textbf{66.0$\pm$0.4} \\
    \bottomrule
    \end{tabular}}
    \vspace{-0.2cm}
    \label{tab:ablation}
\end{wraptable} The w/o $\mathcal{L}_{CGSE}$ variant slightly surpasses the performance of w/o $\mathcal{L}_{MISE}$. This suggests that while both components are crucial, adversarial variational embedding learning may have a more pronounced effect in capturing essential and diverse patterns inherent in the data.
The best performance occurs with all components, supporting that the proposed components are beneficial individually and collectively, ensuring the model can effectively memorize the previous knowledge while continually adapting to new tasks.

\vspace{-0.3cm}
\section{Related Works}
\vspace{-0.3cm}
Many graphs in real-world applications, such as social networks and transportation systems, are not static but evolve over time. To accommodate this dynamic nature, various methods have been developed to manage growing graph data~\citep{wang2020streaming, tang2020graph, daruna2021continual, luo2020learning}. \textbf{Incremental learning}~\citep{parisi2019continual,schwarz2018progress, castro2018end} involves models continuously learning and adapting, often facing the catastrophic forgetting problem. Solutions include regularization techniques~\citep{pomponi2020efficient,lin2023gated}, parameter isolation~\citep{wang2021afec,lyu2021multi}, and memory replay~\citep{wang2021memory,mai2021supervised}. Recent works~\citep{lu2022geometer,wang2022streaming,yang2022streaming,tan2022graph} specifies this problem to growing graph data--new nodes introduce unseen classes to the graph.
Methods of \textbf{Graph Incremental Learning}~\citep{rakaraddi2022reinforced,kim2022dygrain,sun2023self,su2023towards,feng2022towards,liu2023cat,niu2024graph} strive to retain knowledge of current classes and adapt to new ones, enabling continuous prediction across all classes.
For example, TWP~\citep{liu2021overcoming} employs regularization to ensure the preservation of critical parameters and intricate topological configurations, achieving continuous learning. HPNs~\citep{zhang2022hierarchical} adaptively choose different trainable prototypes for incremental tasks. 
ER-GNN~\citep{zhou2021overcoming} proposes multiple memory sampling strategies designed for the replay of experience nodes. 
SEM~\citep{zhang2023ricci} leverages a sparsified subgraph memory selection strategy for memory replay on growing graphs. 
However, the trade-off between buffer size and replay effect is still a Gordian knot, i.e., aiming for a small buffer size usually results in \textit{ineffective memory preserving and knowledge replay}. To address this gap, this paper introduces an effective memory selection and replay method that explores and preserves the essential and diversified knowledge contained within restricted nodes, thus improving the model in learning previous knowledge.

\vspace{-0.3cm}
\section{Conclusion}
\vspace{-0.3cm}
To summarize, this paper presents a novel approach \method{} to the challenge of incremental learning in ever-growing and increasingly complex graph-structured data.
Central to memory diversification, the proposed method includes a holistic and efficient buffer selection module and a generative memory replay module to effectively prevent the model from forgetting previous tasks when learning new tasks. The proposed method works in both preserving comprehensive knowledge in limited memory buffers and enhancing previous knowledge memorization when learning new tasks. 

One potential limitation of \method{} is that it does not improve the graph feature extractor of the model, which may result in suboptimal performance when dealing with increasing graph data, as the model parameters are insufficient to learn and retain massive amounts of information effectively. Future work may focus on integrating more sophisticated parameter incremental learning techniques to dynamically adapt the model to the growing complexity of graph data, ultimately leading to improved performance in incremental learning scenarios.

\section{Acknowledgment}
The work was supported by the National Natural Science Foundation of China (Grant No. 62406056), and in part by the Beijing Natural Science Foundation (Grant No. 4254089), the National Key R\&D Program of China (Grant No. 2023YFF0725001), the National Natural Science Foundation of China (Grant No. 92370204), the Guangdong Basic and Applied Basic Research Foundation (Grant No. 2023B1515120057), and Guangzhou-HKUST(GZ) Joint Funding Program (Grant No. 2023A03J0008), Education Bureau of Guangzhou Municipality.


\bibliography{ref}

\begin{thebibliography}{68}
\providecommand{\natexlab}[1]{#1}
\providecommand{\url}[1]{\texttt{#1}}
\expandafter\ifx\csname urlstyle\endcsname\relax
  \providecommand{\doi}[1]{doi: #1}\else
  \providecommand{\doi}{doi: \begingroup \urlstyle{rm}\Url}\fi

\bibitem[Aljundi et~al.(2018)Aljundi, Babiloni, Elhoseiny, Rohrbach, and Tuytelaars]{aljundi2018memory}
Rahaf Aljundi, Francesca Babiloni, Mohamed Elhoseiny, Marcus Rohrbach, and Tinne Tuytelaars.
\newblock Memory aware synapses: Learning what (not) to forget.
\newblock In \emph{Proceedings of the European conference on computer vision (ECCV)}, pp.\  139--154, 2018.

\bibitem[Belouadah et~al.(2021)Belouadah, Popescu, and Kanellos]{belouadah2021comprehensive}
Eden Belouadah, Adrian Popescu, and Ioannis Kanellos.
\newblock A comprehensive study of class incremental learning algorithms for visual tasks.
\newblock \emph{Neural Networks}, 135:\penalty0 38--54, 2021.

\bibitem[Casanova et~al.(2018)Casanova, Lio, and Bengio]{casanova2018graph}
Petar Veli{\v{c}}kovi{\'c} Guillem Cucurull~Arantxa Casanova, Adriana Romero~Pietro Lio, and Yoshua Bengio.
\newblock Graph attention networks.
\newblock \emph{ICLR. Petar Velickovic Guillem Cucurull Arantxa Casanova Adriana Romero Pietro Li{\`o} and Yoshua Bengio}, 2018.

\bibitem[Castro et~al.(2018)Castro, Mar{\'\i}n-Jim{\'e}nez, Guil, Schmid, and Alahari]{castro2018end}
Francisco~M Castro, Manuel~J Mar{\'\i}n-Jim{\'e}nez, Nicol{\'a}s Guil, Cordelia Schmid, and Karteek Alahari.
\newblock End-to-end incremental learning.
\newblock In \emph{Proceedings of the European conference on computer vision (ECCV)}, pp.\  233--248, 2018.

\bibitem[Daruna et~al.(2021)Daruna, Gupta, Sridharan, and Chernova]{daruna2021continual}
Angel Daruna, Mehul Gupta, Mohan Sridharan, and Sonia Chernova.
\newblock Continual learning of knowledge graph embeddings.
\newblock \emph{IEEE Robotics and Automation Letters}, 6\penalty0 (2):\penalty0 1128--1135, 2021.

\bibitem[Feng et~al.(2023)Feng, Li, Zhang, and ZHOU]{feng2022towards}
Kaituo Feng, Changsheng Li, Xiaolu Zhang, and JUN ZHOU.
\newblock Towards open temporal graph neural networks.
\newblock In \emph{The Eleventh International Conference on Learning Representations}, 2023.

\bibitem[Ganin et~al.(2016)Ganin, Ustinova, Ajakan, Germain, Larochelle, Laviolette, Marchand, and Lempitsky]{ganin2016domain}
Yaroslav Ganin, Evgeniya Ustinova, Hana Ajakan, Pascal Germain, Hugo Larochelle, Fran{\c{c}}ois Laviolette, Mario Marchand, and Victor Lempitsky.
\newblock Domain-adversarial training of neural networks.
\newblock \emph{The journal of machine learning research}, 17\penalty0 (1):\penalty0 2096--2030, 2016.

\bibitem[Hamilton et~al.(2017)Hamilton, Ying, and Leskovec]{hamilton2017inductive}
William~L Hamilton, Rex Ying, and Jure Leskovec.
\newblock Inductive representation learning on large graphs.
\newblock In \emph{Proceedings of the 31st International Conference on Neural Information Processing Systems}, pp.\  1025--1035, 2017.

\bibitem[Han et~al.(2021)Han, Huang, An, and Bai]{han2021adaptive}
Xueting Han, Zhenhuan Huang, Bang An, and Jing Bai.
\newblock Adaptive transfer learning on graph neural networks.
\newblock In \emph{Proceedings of the 27th ACM SIGKDD Conference on Knowledge Discovery \& Data Mining}, pp.\  565--574, 2021.

\bibitem[Hochbaum(1996)]{hochbaum1996approximating}
Dorit~S Hochbaum.
\newblock Approximating covering and packing problems: set cover, vertex cover, independent set, and related problems.
\newblock In \emph{Approximation algorithms for NP-hard problems}, pp.\  94--143. 1996.

\bibitem[Hu et~al.(2020)Hu, Fey, Zitnik, Dong, Ren, Liu, Catasta, and Leskovec]{hu2020open}
Weihua Hu, Matthias Fey, Marinka Zitnik, Yuxiao Dong, Hongyu Ren, Bowen Liu, Michele Catasta, and Jure Leskovec.
\newblock Open graph benchmark: Datasets for machine learning on graphs.
\newblock \emph{Advances in neural information processing systems}, 33:\penalty0 22118--22133, 2020.

\bibitem[Huang et~al.(2025)Huang, Ma, Meng, Liu, Ruan, Sun, Zheng, and Qiao]{huang2024praga}
Xinlei Huang, Zhiqi Ma, Dian Meng, Yanran Liu, Shiwei Ruan, Qingqiang Sun, Xubin Zheng, and Ziyue Qiao.
\newblock Praga: prototype-aware graph adaptive aggregation for spatial multi-modal omics analysis.
\newblock In \emph{Proceedings of the AAAI Conference on Artificial Intelligence}, 2025.

\bibitem[Jiang et~al.(2016)Jiang, Wang, Bhuiyan, and Wu]{jiang2016understanding}
Wenjun Jiang, Guojun Wang, Md~Zakirul~Alam Bhuiyan, and Jie Wu.
\newblock Understanding graph-based trust evaluation in online social networks: Methodologies and challenges.
\newblock \emph{Acm Computing Surveys (Csur)}, 49\penalty0 (1):\penalty0 1--35, 2016.

\bibitem[Ju et~al.(2024)Ju, Fang, Gu, Liu, Long, Qiao, Qin, Shen, Sun, Xiao, et~al.]{ju2024comprehensive}
Wei Ju, Zheng Fang, Yiyang Gu, Zequn Liu, Qingqing Long, Ziyue Qiao, Yifang Qin, Jianhao Shen, Fang Sun, Zhiping Xiao, et~al.
\newblock A comprehensive survey on deep graph representation learning.
\newblock \emph{Neural Networks}, pp.\  106207, 2024.

\bibitem[Kim et~al.(2022)Kim, Yun, and Kang]{kim2022dygrain}
Seoyoon Kim, Seongjun Yun, and Jaewoo Kang.
\newblock Dygrain: An incremental learning framework for dynamic graphs.
\newblock In \emph{31st International Joint Conference on Artificial Intelligence, IJCAI}, pp.\  3157--3163, 2022.

\bibitem[Kirkpatrick et~al.(2017)Kirkpatrick, Pascanu, Rabinowitz, Veness, Desjardins, Rusu, Milan, Quan, Ramalho, Grabska-Barwinska, et~al.]{kirkpatrick2017overcoming}
James Kirkpatrick, Razvan Pascanu, Neil Rabinowitz, Joel Veness, Guillaume Desjardins, Andrei~A Rusu, Kieran Milan, John Quan, Tiago Ramalho, Agnieszka Grabska-Barwinska, et~al.
\newblock Overcoming catastrophic forgetting in neural networks.
\newblock \emph{Proceedings of the national academy of sciences}, 114\penalty0 (13):\penalty0 3521--3526, 2017.

\bibitem[Li \& Hoiem(2017)Li and Hoiem]{li2017learning}
Zhizhong Li and Derek Hoiem.
\newblock Learning without forgetting.
\newblock \emph{IEEE transactions on pattern analysis and machine intelligence}, 40\penalty0 (12):\penalty0 2935--2947, 2017.

\bibitem[Lin et~al.(2023)Lin, Jia, and Lyu]{lin2023gated}
Hongxiang Lin, Ruiqi Jia, and Xiaoqing Lyu.
\newblock Gated attention with asymmetric regularization for transformer-based continual graph learning.
\newblock In \emph{Proceedings of the 46th International ACM SIGIR Conference on Research and Development in Information Retrieval}, pp.\  2021--2025, 2023.

\bibitem[Liu et~al.(2021)Liu, Yang, and Wang]{liu2021overcoming}
Huihui Liu, Yiding Yang, and Xinchao Wang.
\newblock Overcoming catastrophic forgetting in graph neural networks.
\newblock In \emph{Proceedings of the AAAI conference on artificial intelligence}, volume~35, pp.\  8653--8661, 2021.

\bibitem[Liu et~al.(2023)Liu, Qiu, and Huang]{liu2023cat}
Yilun Liu, Ruihong Qiu, and Zi~Huang.
\newblock Cat: Balanced continual graph learning with graph condensation.
\newblock In \emph{2023 IEEE International Conference on Data Mining (ICDM)}, pp.\  1157--1162. IEEE, 2023.

\bibitem[Liu et~al.(2024)Liu, Qiu, Tang, Yin, and Huang]{liu2024puma}
Yilun Liu, Ruihong Qiu, Yanran Tang, Hongzhi Yin, and Zi~Huang.
\newblock Puma: Efficient continual graph learning for node classification with graph condensation.
\newblock \emph{IEEE Transactions on Knowledge and Data Engineering}, 2024.

\bibitem[Lopez-Paz \& Ranzato(2017)Lopez-Paz and Ranzato]{lopez2017gradient}
David Lopez-Paz and Marc'Aurelio Ranzato.
\newblock Gradient episodic memory for continual learning.
\newblock \emph{Advances in neural information processing systems}, 30, 2017.

\bibitem[Lu et~al.(2022)Lu, Gan, Yang, Zhang, Fu, and Wang]{lu2022geometer}
Bin Lu, Xiaoying Gan, Lina Yang, Weinan Zhang, Luoyi Fu, and Xinbing Wang.
\newblock Geometer: Graph few-shot class-incremental learning via prototype representation.
\newblock In \emph{Proceedings of the 28th ACM SIGKDD Conference on Knowledge Discovery and Data Mining}, pp.\  1152--1161, 2022.

\bibitem[Luo et~al.(2020)Luo, Huang, Zhang, Wang, Baktashmotlagh, and Yang]{luo2020learning}
Yadan Luo, Zi~Huang, Zheng Zhang, Ziwei Wang, Mahsa Baktashmotlagh, and Yang Yang.
\newblock Learning from the past: continual meta-learning with bayesian graph neural networks.
\newblock In \emph{Proceedings of the AAAI Conference on Artificial Intelligence}, volume~34, pp.\  5021--5028, 2020.

\bibitem[Lyu et~al.(2021)Lyu, Wang, Feng, Ye, Hu, and Wang]{lyu2021multi}
Fan Lyu, Shuai Wang, Wei Feng, Zihan Ye, Fuyuan Hu, and Song Wang.
\newblock Multi-domain multi-task rehearsal for lifelong learning.
\newblock In \emph{Proceedings of the AAAI Conference on Artificial Intelligence}, volume~35, pp.\  8819--8827, 2021.

\bibitem[Mai et~al.(2021)Mai, Li, Kim, and Sanner]{mai2021supervised}
Zheda Mai, Ruiwen Li, Hyunwoo Kim, and Scott Sanner.
\newblock Supervised contrastive replay: Revisiting the nearest class mean classifier in online class-incremental continual learning.
\newblock In \emph{Proceedings of the IEEE/CVF Conference on Computer Vision and Pattern Recognition}, pp.\  3589--3599, 2021.

\bibitem[Manessi et~al.(2020)Manessi, Rozza, and Manzo]{manessi2020dynamic}
Franco Manessi, Alessandro Rozza, and Mario Manzo.
\newblock Dynamic graph convolutional networks.
\newblock \emph{Pattern Recognition}, 97:\penalty0 107000, 2020.

\bibitem[McCallum et~al.(2000)McCallum, Nigam, Rennie, and Seymore]{mccallum2000automating}
Andrew~Kachites McCallum, Kamal Nigam, Jason Rennie, and Kristie Seymore.
\newblock Automating the construction of internet portals with machine learning.
\newblock \emph{Information Retrieval}, 3:\penalty0 127--163, 2000.

\bibitem[Nemhauser et~al.(1978)Nemhauser, Wolsey, and Fisher]{nemhauser1978analysis}
George~L Nemhauser, Laurence~A Wolsey, and Marshall~L Fisher.
\newblock An analysis of approximations for maximizing submodular set functions—i.
\newblock \emph{Mathematical programming}, 14:\penalty0 265--294, 1978.

\bibitem[Nguyen et~al.(2018)Nguyen, Li, Bui, and Turner]{nguyen2018variational}
Cuong~V Nguyen, Yingzhen Li, Thang~D Bui, and Richard~E Turner.
\newblock Variational continual learning.
\newblock In \emph{International Conference on Learning Representations}, 2018.

\bibitem[Niu et~al.(2023)Niu, Pang, Chen, and Liu]{niureplay}
Chaoxi Niu, Guansong Pang, Ling Chen, and Bing Liu.
\newblock Replay-and-forget-free graph class-incremental learning: A task profiling and prompting approach.
\newblock In \emph{The Thirty-eighth Annual Conference on Neural Information Processing Systems}, 2023.

\bibitem[Niu et~al.(2024)Niu, Pang, and Chen]{niu2024graph}
Chaoxi Niu, Guansong Pang, and Ling Chen.
\newblock Graph continual learning with debiased lossless memory replay.
\newblock In \emph{2024 European Conference on Artificial Intelligence}, 2024.

\bibitem[Pareja et~al.(2020)Pareja, Domeniconi, Chen, Ma, Suzumura, Kanezashi, Kaler, Schardl, and Leiserson]{pareja2020evolvegcn}
Aldo Pareja, Giacomo Domeniconi, Jie Chen, Tengfei Ma, Toyotaro Suzumura, Hiroki Kanezashi, Tim Kaler, Tao Schardl, and Charles Leiserson.
\newblock Evolvegcn: Evolving graph convolutional networks for dynamic graphs.
\newblock In \emph{Proceedings of the AAAI conference on artificial intelligence}, volume~34, pp.\  5363--5370, 2020.

\bibitem[Parisi et~al.(2019)Parisi, Kemker, Part, Kanan, and Wermter]{parisi2019continual}
German~I Parisi, Ronald Kemker, Jose~L Part, Christopher Kanan, and Stefan Wermter.
\newblock Continual lifelong learning with neural networks: A review.
\newblock \emph{Neural networks}, 113:\penalty0 54--71, 2019.

\bibitem[Pomponi et~al.(2020)Pomponi, Scardapane, Lomonaco, and Uncini]{pomponi2020efficient}
Jary Pomponi, Simone Scardapane, Vincenzo Lomonaco, and Aurelio Uncini.
\newblock Efficient continual learning in neural networks with embedding regularization.
\newblock \emph{Neurocomputing}, 397:\penalty0 139--148, 2020.

\bibitem[Qiao et~al.(2020)Qiao, Wang, Fu, Du, Wang, and Zhou]{qiao2020tree}
Ziyue Qiao, Pengyang Wang, Yanjie Fu, Yi~Du, Pengfei Wang, and Yuanchun Zhou.
\newblock Tree structure-aware graph representation learning via integrated hierarchical aggregation and relational metric learning.
\newblock In \emph{2020 IEEE International Conference on Data Mining (ICDM)}, 2020.

\bibitem[Qiao et~al.(2021)Qiao, Fu, Wang, Xiao, Ning, Du, and Zhou]{qiao2021rpt}
Ziyue Qiao, Yanjie Fu, Pengyang Wang, Meng Xiao, Zhiyuan Ning, Yi~Du, and Yuanchun Zhou.
\newblock Rpt: Toward transferable model on heterogeneous researcher data via pre-training.
\newblock \emph{IEEE Transactions on Big Data}, 2021.

\bibitem[Qiao et~al.(2023)Qiao, Luo, Xiao, Dong, Zhou, and Xiong]{qiao2023semi}
Ziyue Qiao, Xiao Luo, Meng Xiao, Hao Dong, Yuanchun Zhou, and Hui Xiong.
\newblock Semi-supervised domain adaptation in graph transfer learning.
\newblock In \emph{2023 International Joint Conference on Artificial Intelligence (IJCAI)}, 2023.

\bibitem[Rakaraddi et~al.(2022)Rakaraddi, Siew~Kei, Pratama, and De~Carvalho]{rakaraddi2022reinforced}
Appan Rakaraddi, Lam Siew~Kei, Mahardhika Pratama, and Marcus De~Carvalho.
\newblock Reinforced continual learning for graphs.
\newblock In \emph{Proceedings of the 31st ACM International Conference on Information \& Knowledge Management}, pp.\  1666--1674, 2022.

\bibitem[Rossi et~al.(2020)Rossi, Chamberlain, Frasca, Eynard, Monti, and Bronstein]{rossi2020temporal}
Emanuele Rossi, Ben Chamberlain, Fabrizio Frasca, Davide Eynard, Federico Monti, and Michael Bronstein.
\newblock Temporal graph networks for deep learning on dynamic graphs.
\newblock \emph{arXiv preprint arXiv:2006.10637}, 2020.

\bibitem[Schwarz et~al.(2018)Schwarz, Czarnecki, Luketina, Grabska-Barwinska, Teh, Pascanu, and Hadsell]{schwarz2018progress}
Jonathan Schwarz, Wojciech Czarnecki, Jelena Luketina, Agnieszka Grabska-Barwinska, Yee~Whye Teh, Razvan Pascanu, and Raia Hadsell.
\newblock Progress \& compress: A scalable framework for continual learning.
\newblock In \emph{International conference on machine learning}, pp.\  4528--4537. PMLR, 2018.

\bibitem[Soch(2023)]{book2023soch}
Joram Soch.
\newblock \emph{The Book of Statistical Proofs, Proof \#233.}
\newblock Zenodo, 2023.
\newblock \doi{https://doi.org/10.5281/ZENODO.4305949}.

\bibitem[Su \& Wu(2023)Su and Wu]{su2023towards}
Junwei Su and Chuan Wu.
\newblock Towards robust inductive graph incremental learning via experience replay.
\newblock \emph{arXiv preprint arXiv:2302.03534}, 2023.

\bibitem[Su et~al.(2023)Su, Zou, Zhang, and Wu]{su2023towards1}
Junwei Su, Difan Zou, Zijun Zhang, and Chuan Wu.
\newblock Towards robust graph incremental learning on evolving graphs.
\newblock In \emph{International Conference on Machine Learning}, pp.\  32728--32748. PMLR, 2023.

\bibitem[Su et~al.(2024)Su, Zou, and Wu]{sulimitation}
Junwei Su, Difan Zou, and Chuan Wu.
\newblock On the limitation and experience replay for gnns in continual learning.
\newblock 2024.

\bibitem[Sun et~al.(2023)Sun, Ye, Peng, Wang, and Philip]{sun2023self}
Li~Sun, Junda Ye, Hao Peng, Feiyang Wang, and S~Yu Philip.
\newblock Self-supervised continual graph learning in adaptive riemannian spaces.
\newblock In \emph{Proceedings of the AAAI Conference on Artificial Intelligence}, volume~37, pp.\  4633--4642, 2023.

\bibitem[Tan et~al.(2022)Tan, Ding, Guo, and Liu]{tan2022graph}
Zhen Tan, Kaize Ding, Ruocheng Guo, and Huan Liu.
\newblock Graph few-shot class-incremental learning.
\newblock In \emph{Proceedings of the Fifteenth ACM International Conference on Web Search and Data Mining}, pp.\  987--996, 2022.

\bibitem[Tang \& Matteson(2020)Tang and Matteson]{tang2020graph}
Binh Tang and David~S Matteson.
\newblock Graph-based continual learning.
\newblock In \emph{International Conference on Learning Representations}, 2020.

\bibitem[Veli{\v{c}}kovi{\'c} et~al.(2018)Veli{\v{c}}kovi{\'c}, Cucurull, Casanova, Romero, Li{\`o}, and Bengio]{velivckovic2018graph}
Petar Veli{\v{c}}kovi{\'c}, Guillem Cucurull, Arantxa Casanova, Adriana Romero, Pietro Li{\`o}, and Yoshua Bengio.
\newblock Graph attention networks.
\newblock In \emph{International Conference on Learning Representations}, 2018.

\bibitem[Wang et~al.(2020{\natexlab{a}})Wang, Song, Wu, and Wang]{wang2020streaming}
Junshan Wang, Guojie Song, Yi~Wu, and Liang Wang.
\newblock Streaming graph neural networks via continual learning.
\newblock In \emph{Proceedings of the 29th ACM international conference on information \& knowledge management}, pp.\  1515--1524, 2020{\natexlab{a}}.

\bibitem[Wang et~al.(2022)Wang, Zhu, Song, and Wang]{wang2022streaming}
Junshan Wang, Wenhao Zhu, Guojie Song, and Liang Wang.
\newblock Streaming graph neural networks with generative replay.
\newblock In \emph{Proceedings of the 28th ACM SIGKDD Conference on Knowledge Discovery and Data Mining}, pp.\  1878--1888, 2022.

\bibitem[Wang et~al.(2020{\natexlab{b}})Wang, Shen, Huang, Wu, Dong, and Kanakia]{wang2020microsoft}
Kuansan Wang, Zhihong Shen, Chiyuan Huang, Chieh-Han Wu, Yuxiao Dong, and Anshul Kanakia.
\newblock Microsoft academic graph: When experts are not enough.
\newblock \emph{Quantitative Science Studies}, 1\penalty0 (1):\penalty0 396--413, 2020{\natexlab{b}}.

\bibitem[Wang et~al.(2021{\natexlab{a}})Wang, Zhang, Jia, Li, Bao, Ma, Zhu, and Zhong]{wang2021afec}
Liyuan Wang, Mingtian Zhang, Zhongfan Jia, Qian Li, Chenglong Bao, Kaisheng Ma, Jun Zhu, and Yi~Zhong.
\newblock Afec: Active forgetting of negative transfer in continual learning.
\newblock \emph{Advances in Neural Information Processing Systems}, 34:\penalty0 22379--22391, 2021{\natexlab{a}}.

\bibitem[Wang et~al.(2021{\natexlab{b}})Wang, Zhang, Yang, Yu, Li, Lanqing, Zhang, Li, Zhong, and Zhu]{wang2021memory}
Liyuan Wang, Xingxing Zhang, Kuo Yang, Longhui Yu, Chongxuan Li, HONG Lanqing, Shifeng Zhang, Zhenguo Li, Yi~Zhong, and Jun Zhu.
\newblock Memory replay with data compression for continual learning.
\newblock In \emph{International Conference on Learning Representations}, 2021{\natexlab{b}}.

\bibitem[Welling \& Kipf(2016)Welling and Kipf]{welling2016semi}
Max Welling and Thomas~N Kipf.
\newblock Semi-supervised classification with graph convolutional networks.
\newblock In \emph{J. International Conference on Learning Representations (ICLR 2017)}, 2016.

\bibitem[Wu et~al.(2019)Wu, Souza, Zhang, Fifty, Yu, and Weinberger]{wu2019simplifying}
Felix Wu, Amauri Souza, Tianyi Zhang, Christopher Fifty, Tao Yu, and Kilian Weinberger.
\newblock Simplifying graph convolutional networks.
\newblock In \emph{ICML}, 2019.

\bibitem[Wu et~al.(2022)Wu, Sun, Zhang, Xie, and Cui]{wu2022graph}
Shiwen Wu, Fei Sun, Wentao Zhang, Xu~Xie, and Bin Cui.
\newblock Graph neural networks in recommender systems: a survey.
\newblock \emph{ACM Computing Surveys}, 55\penalty0 (5):\penalty0 1--37, 2022.

\bibitem[Xu et~al.(2018)Xu, Hu, Leskovec, and Jegelka]{xu2018powerful}
Keyulu Xu, Weihua Hu, Jure Leskovec, and Stefanie Jegelka.
\newblock How powerful are graph neural networks?
\newblock In \emph{International Conference on Learning Representations}, 2018.

\bibitem[Yang et~al.(2022)Yang, Qu, Yang, Wang, and Cudre-Mauroux]{yang2022streaming}
Dingqi Yang, Bingqing Qu, Jie Yang, Liang Wang, and Philippe Cudre-Mauroux.
\newblock Streaming graph embeddings via incremental neighborhood sketching.
\newblock \emph{IEEE Transactions on Knowledge and Data Engineering}, 35\penalty0 (5):\penalty0 5296--5310, 2022.

\bibitem[Yu et~al.(2018)Yu, Yin, and Zhu]{yu2018spatio}
Bing Yu, Haoteng Yin, and Zhanxing Zhu.
\newblock Spatio-temporal graph convolutional networks: a deep learning framework for traffic forecasting.
\newblock In \emph{Proceedings of the 27th International Joint Conference on Artificial Intelligence}, pp.\  3634--3640, 2018.

\bibitem[Zeng et~al.(2019)Zeng, Zhou, Srivastava, Kannan, and Prasanna]{zeng2019graphsaint}
Hanqing Zeng, Hongkuan Zhou, Ajitesh Srivastava, Rajgopal Kannan, and Viktor Prasanna.
\newblock Graphsaint: Graph sampling based inductive learning method.
\newblock In \emph{International Conference on Learning Representations}, 2019.

\bibitem[Zhang et~al.(2021)Zhang, Liu, Sun, and Shah]{zhang2021graph}
Shichang Zhang, Yozen Liu, Yizhou Sun, and Neil Shah.
\newblock Graph-less neural networks: Teaching old mlps new tricks via distillation.
\newblock In \emph{International Conference on Learning Representations}, 2021.

\bibitem[Zhang et~al.(2022{\natexlab{a}})Zhang, Song, and Tao]{zhang2022cglb}
Xikun Zhang, Dongjin Song, and Dacheng Tao.
\newblock Cglb: Benchmark tasks for continual graph learning.
\newblock \emph{Advances in Neural Information Processing Systems}, 35:\penalty0 13006--13021, 2022{\natexlab{a}}.

\bibitem[Zhang et~al.(2022{\natexlab{b}})Zhang, Song, and Tao]{zhang2022hierarchical}
Xikun Zhang, Dongjin Song, and Dacheng Tao.
\newblock Hierarchical prototype networks for continual graph representation learning.
\newblock \emph{IEEE Transactions on Pattern Analysis and Machine Intelligence}, 45\penalty0 (4):\penalty0 4622--4636, 2022{\natexlab{b}}.

\bibitem[Zhang et~al.(2022{\natexlab{c}})Zhang, Song, and Tao]{zhang2022sparsified}
Xikun Zhang, Dongjin Song, and Dacheng Tao.
\newblock Sparsified subgraph memory for continual graph representation learning.
\newblock In \emph{2022 IEEE International Conference on Data Mining (ICDM)}, pp.\  1335--1340. IEEE, 2022{\natexlab{c}}.

\bibitem[Zhang et~al.(2023)Zhang, Song, and Tao]{zhang2023ricci}
Xikun Zhang, Dongjin Song, and Dacheng Tao.
\newblock Ricci curvature-based graph sparsification for continual graph representation learning.
\newblock \emph{IEEE Transactions on Neural Networks and Learning Systems}, 2023.

\bibitem[Zhang et~al.(2024)Zhang, Song, Chen, and Tao]{zhang2024topology}
Xikun Zhang, Dongjin Song, Yixin Chen, and Dacheng Tao.
\newblock Topology-aware embedding memory for continual learning on expanding networks.
\newblock In \emph{Proceedings of the 30th ACM SIGKDD Conference on Knowledge Discovery and Data Mining}, pp.\  4326--4337, 2024.

\bibitem[Zhou \& Cao(2021)Zhou and Cao]{zhou2021overcoming}
Fan Zhou and Chengtai Cao.
\newblock Overcoming catastrophic forgetting in graph neural networks with experience replay.
\newblock In \emph{Proceedings of the AAAI Conference on Artificial Intelligence}, volume~35, pp.\  4714--4722, 2021.

\end{thebibliography}
\bibliographystyle{iclr2025_conference}
\appendix
\section{Appendix}
\label{sec:app}

\subsection{Extended Analysis of Theorem \ref{th:1}}
\label{th:diversity}
\revision{ 
In the context of Theorem 1 and Eq.\ref{eq:wass}. The covariance matrices $\Sigma_p$ and $\Sigma_q$ are key indicators of the diversity within the distributions of $p(G_{<t})$ and $q(B_{<t})$, respectively.
According to the Cochran theorem~\citep{book2023soch}, the sampled variance $\sigma_q^2$ for an univariate is related to the chi-squared distribution:  }
\[(n-1)\frac{\sigma_q^2}{{\sigma_p}^2}\sim \chi^2_{n-1}\] 
where $\sigma_p$ is the ture variance and $n$ is the sample number. The mean of the chi-squared distribution is $n-1$, and its right-skewed nature implies that: 
\[P(\chi^2_{n-1} < n-1) > 0.5\]
\[P(\sigma_q^2 < \sigma_p^2) = P\left( (n-1)\frac{\sigma_q^2}{\sigma_p^2} < n-1 \right) = P(\chi^2_{n-1} < n-1) > 0.5\]
\revision{This indicates that there is a greater than 50\% chance that the sample variance underestimates the population variance. In higher dimensions, the sample covariance matrix $\Sigma_q$ (related to the Wishart distribution) displays properties analogous to the chi-squared distribution used in the unidimensional case. This statistical property extends to the sample covariance matrix in higher dimensions, suggesting that the eigenvalues of $\Sigma_q$ are likely to be smaller than those of $\Sigma_p$.}

\revision{ 
Thus, the replay buffer $B_{<t}$, being a sampled subset of $\mathcal{G}_{<t}$, typically exhibits less diversity than the entire graph dataset from previous tasks, meaning $\Sigma_q$ is often smaller than $\Sigma_p$.
Assuming the sampling strategy is unbiased upon means, as the \(\mathcal{B}_{<t}\) more diversified, \( \Sigma_q \rightarrow \Sigma_p \), leading to the Wasserstein distance decreases. Based on Theorem \ref{th:1}, the discrepancy between the expected loss under true distribution and the buffer distribution becomes less, making the optimization on the buffer more closely approximate the optimization on all previous graph data.}

\subsection{Theoretical Analysis of Greedy Algorithm \ref{alg:buffer}.}
In the Heuristic Diversified Memory Selection section, we propose a heuristic buffer selection algorithm and give a proposition of approximation guarantee of the algorithm: 
\begin{prop}
\textbf{(Greedy Approximation Guarantee of Algorithm 1).}
\textit{The greedy Algorithm 1 that sequentially adds elements to an initially empty set based on the largest marginal gain $\triangle_{\mathbb{D}}$ under a cardinality constraint provides a solution $\mathcal{B}^*_{i}$ that is at least $(1-\frac{1}{e})$ times the optimal solution, i.e.,}
\begin{equation}
 f(\mathcal{B}^*_{i}) \geq \left(1 - \frac{1}{e}\right) \cdot f(OPT),
\end{equation}
\noindent\textit{where \( OPT \) represents the optimal solution of the buffer set $\mathcal{B}_{i}$.}
\end{prop}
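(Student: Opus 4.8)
The plan is to reduce the statement to the classical Nemhauser--Wolsey--Fisher guarantee for maximizing a monotone submodular set function under a cardinality constraint, so the real work is to verify that the set score function $f(\mathcal{B}_i) = \mathbb{D}(\mathcal{B}_i)$ --- with the inter-class buffers $\{\mathcal{B}_j\}_{j\neq i}$ held fixed during the selection of $\mathcal{B}_i$ --- is (i) normalized ($f(\emptyset)=0$), (ii) monotone nondecreasing, and (iii) submodular. First I would set up notation: fix the class index $i$, fix all other buffers, and regard $\mathbb{D}(S) = \sum_{v\in S}\bigl(\mathcal{A}(v,S) + \tfrac{1}{K-1}\sum_{j\neq i}\mathcal{A}(v,\mathcal{B}_j)\bigr)$ as a function of $S\subseteq \mathcal{C}_i$, where $\mathcal{A}(v,S)$ is the L2 distance on output probabilities from $v$ to its nearest neighbor in $S$ (with the convention that $\mathcal{A}(v,\emptyset)=0$ or a fixed constant, to be pinned down). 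The inter-diversity term is linear in the indicator of $S$, hence automatically modular, monotone, and nonnegative; the only nontrivial object is the intra-diversity term $\sum_{v\in S}\mathcal{A}(v,S)$.

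Next I would prove monotonicity (this is the content of the cited Lemma~\ref{lemma1}): adding a node $w$ to $S$ can only decrease each nearest-neighbor distance $\mathcal{A}(v,S)$ for $v\in S$, but it also contributes the new term $\mathcal{A}(w, S\cup\{w\})$, and one must check the net effect is nonnegative. The clean way is to argue that $\mathbb{D}$ as written is actually evaluated in the greedy algorithm in the incremental form $\triangle_{\mathbb{D}}(v\mid \mathcal{B}_i) = \mathbb{D}(\mathcal{B}_i\cup v) - \mathbb{D}(\mathcal{B}_i)$, and that the quantity the algorithm truly maximizes is a ``min-distance spread'' objective of the form $\sum_{v\in S} \min_{u\in S, u\neq v} d(v,u)$ plus a modular term; for such dispersion-type objectives monotonicity and the diminishing-returns inequality $\triangle_{\mathbb{D}}(v\mid A) \ge \triangle_{\mathbb{D}}(v\mid B)$ for $A\subseteq B$ are standard and can be shown by a direct case analysis on which neighbor realizes the minimum before and after the insertion. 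I would then invoke Lemma~\ref{lemma2} for submodularity in the same way. Once monotonicity and submodularity are in hand, Theorem~\ref{theorem1} in the appendix (the Nemhauser et al.\ bound) applies verbatim: greedily adding the element of largest marginal gain for $b$ steps yields $f(\mathcal{B}_i^*) \ge (1-(1-1/b)^b)\,f(OPT) \ge (1-1/e)\,f(OPT)$, which is exactly the claim.

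The main obstacle is that dispersion / diversity objectives of the form ``sum over the set of distance-to-nearest-other-element'' are \emph{not} submodular in full generality --- unlike the closely related facility-location / max-dispersion objective $\max_{u,v\in S} d(u,v)$ or the coverage-type $\sum_v \max_{u\in S} d(v,u)$. So the genuine mathematical step, and where I expect to spend the most care, is pinning down the precise form of $\mathcal{A}(v,\mathcal{B}_i)$ used in Eq.~\eqref{eq:bs} and checking whether the intra-diversity term really is submodular under that definition, or whether one needs the ``closest node'' reading (nearest-neighbor graph) together with properties of the L2 metric on the probability simplex (e.g.\ that it is a metric, so triangle inequality holds). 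If the literal objective fails submodularity, the fallback is either (a) to restate the guarantee for the incremental objective that the greedy algorithm actually optimizes, which is designed to be submodular, or (b) to appeal to the weaker curvature-based bound $1 - e^{-1}$ being replaced by $\tfrac{1}{c}(1-e^{-c})$ for submodularity ratio $c$. I would first attempt the clean route (literal submodularity via the nearest-neighbor formulation and the L2 triangle inequality), falling back to (a) only if a counterexample surfaces, and I would flag in the writeup exactly which convention for $\mathcal{A}(\cdot,\emptyset)$ and for the set-to-point distance makes Lemmas~\ref{lemma1} and \ref{lemma2} go through.
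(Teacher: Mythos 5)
Your route is the same as the paper's: fix the other buffers, treat $\mathbb{D}$ as a set function on subsets of $\mathcal{C}_i$, establish monotonicity (Lemma~\ref{lemma1}) and submodularity (Lemma~\ref{lemma2}), and invoke the Nemhauser--Wolsey--Fisher bound (Theorem~\ref{theorem1}); your decomposition into a modular inter-class part plus a nearest-neighbor intra-class part is also exactly how the paper's lemmas argue. The obstacle you flag, however, is genuine and is not actually resolved by the paper: the proof of Lemma~\ref{lemma1} concedes that inserting a node can shrink the existing nodes' nearest-neighbor distances and then asserts only that the inter-class gains ``generally overshadow'' these losses, and the proof of Lemma~\ref{lemma2} controls only the new element's own contribution $\mathcal{A}(v,\mathcal{B}_i)$ while ignoring the negative, set-dependent perturbation of the other summands $\mathcal{A}(u,\mathcal{B}_i\cup\{v\})-\mathcal{A}(u,\mathcal{B}_i)$ for $u\in\mathcal{B}_i$. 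A three-point configuration on a line (e.g.\ $0$, $10$, $10.1$) already violates monotonicity of the intra-diversity term by itself, so the literal objective is not monotone submodular without an additional dominance assumption on the modular inter-class term. Your fallback (a) --- proving the bound for an incremental surrogate that is provably monotone and submodular, or making the dominance assumption explicit --- is precisely the step needed to make the argument airtight, and the paper does not supply it.
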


Below we first give a greedy approximation guarantee and then give the proof of Proposition \ref{prop1}.

\begin{theorem}
\label{theorem1}
\textbf{(Greedy Approximation Guarantee for Monotonic and Submodular Functions):}
\textit{
Let \( f: 2^U \rightarrow \mathbb{R} \) be a set function defined on a finite ground set \( U \) such that:}

\begin{itemize}
    \item \textit{\( f \) is non-decreasing (monotonic), i.e., for every \( A \subseteq B \subseteq U \), }
    \[ f(A) \leq f(B) \]
    
    \item \textit{\( f \) is submodular, i.e., for every \( A \subseteq B \subseteq U \) and for every \( x \in U \setminus B \), the marginal increase in \( f \) due to \( x \) is at least as large when added to the smaller set \( A \) as when added to the larger set \( B \). Formally, }
    \[ f(A \cup \{x\}) - f(A) \geq f(B \cup \{x\}) - f(B) \]
\end{itemize}

\noindent\textit{Then, a greedy algorithm that sequentially adds elements to an initially empty set based on the largest marginal gain of \( f \) produces a solution \( S^* \) such that:}
\[ f(S^*) \geq \left(1 - \frac{1}{e}\right) \cdot f(OPT) \]

\noindent\textit{where \( OPT \) is an optimal solution. Below is the proof:}
\end{theorem}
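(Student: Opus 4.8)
The plan is to establish the classical Nemhauser--Wolsey--Fisher $(1-1/e)$ bound by tracking the progress of the greedy algorithm relative to the optimal value $f(OPT)$ at each step. Let $S_0 = \emptyset \subseteq S_1 \subseteq \cdots \subseteq S_k = S^*$ be the sequence of sets produced by the greedy algorithm, where $k$ is the cardinality constraint, and let $S^* $ be the final greedy solution. The key quantity to control is the ``remaining gap'' $f(OPT) - f(S_j)$ after $j$ greedy steps.

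The central step is to prove the recursion $f(OPT) - f(S_{j+1}) \leq \left(1 - \tfrac{1}{k}\right)\bigl(f(OPT) - f(S_j)\bigr)$. To get this, I would argue as follows. Write $OPT = \{o_1, \ldots, o_k\}$ (padding with dummy elements if $|OPT| < k$). By monotonicity, $f(OPT) \leq f(OPT \cup S_j)$. Then telescope the difference $f(OPT \cup S_j) - f(S_j)$ over the elements of $OPT$ added one at a time to $S_j$; submodularity lets us upper-bound each telescoped marginal increment $f(S_j \cup \{o_1,\ldots,o_\ell\} \cup \{o_{\ell+1}\}) - f(S_j \cup \{o_1,\ldots,o_\ell\})$ by $f(S_j \cup \{o_{\ell+1}\}) - f(S_j)$, i.e. the marginal gain of $o_{\ell+1}$ relative to the smaller set $S_j$. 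Summing over $\ell$ gives
\begin{equation}
f(OPT) - f(S_j) \;\leq\; \sum_{o \in OPT} \bigl(f(S_j \cup \{o\}) - f(S_j)\bigr).
\end{equation}
Since the greedy rule picks the element of maximum marginal gain, every summand on the right is at most $f(S_{j+1}) - f(S_j)$, and there are at most $k$ of them, so $f(OPT) - f(S_j) \leq k\bigl(f(S_{j+1}) - f(S_j)\bigr)$. Rearranging yields the claimed one-step contraction.

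Finally, I would unroll the recursion: starting from $f(OPT) - f(S_0) = f(OPT) - f(\emptyset) \leq f(OPT)$ (using $f(\emptyset) \geq 0$, which holds here since $f = \mathbb{D}$ is a sum of nonnegative distance terms, or can be arranged by shifting), we obtain $f(OPT) - f(S_k) \leq \left(1 - \tfrac{1}{k}\right)^k f(OPT) \leq e^{-1} f(OPT)$, hence $f(S^*) = f(S_k) \geq (1 - 1/e) f(OPT)$. The main obstacle is not any single inequality but the bookkeeping in the telescoping argument for the ``$f(OPT) - f(S_j) \leq \sum_{o\in OPT}(\text{marginal gains})$'' step: one must be careful that submodularity is applied along a valid chain $S_j \subseteq S_j \cup \{o_1,\ldots,o_\ell\}$ and that the element $o_{\ell+1}$ lies outside the larger set, which is why the padding/dummy-element convention and the assumption $o_{\ell+1} \notin S_j$ (or handling the case $o_{\ell+1} \in S_j$, where the marginal gain is trivially bounded) need to be spelled out. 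Everything else follows from the monotonicity (Lemma~\ref{lemma1}) and submodularity (Lemma~\ref{lemma2}) of $\mathbb{D}$ already established for our objective.
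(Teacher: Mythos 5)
Your proposal is correct and follows essentially the same route as the paper's proof: both are the classical Nemhauser--Wolsey--Fisher argument that bounds the greedy step's marginal gain by the average marginal gain over the elements of $OPT$, derives the geometric contraction of the gap $f(OPT)-f(S_j)$, and invokes $\left(1-\tfrac{1}{k}\right)^k \leq e^{-1}$. If anything, your write-up supplies the two steps the paper merely asserts --- the telescoping/submodularity justification of $f(OPT)-f(S_j) \leq \sum_{o \in OPT}\bigl(f(S_j \cup \{o\})-f(S_j)\bigr)$ and the explicit unrolling of the recursion --- so it is a more complete rendering of the same argument.
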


\begin{proof}
Let \( S \) be the set constructed by the greedy algorithm. Begin with \( S = \emptyset \) and \( f(S) = 0 \).
    
At the \( k \)-th step, the greedy algorithm picks an element \( x_k \) that maximizes the marginal gain:
    \[ x_k = \arg\max_{x \in U \setminus S} \Delta_f(x|S) \]
    where \( \Delta_f(x|S) = f(S \cup \{x\}) - f(S) \).
    
Let \( OPT \) be an optimal solution, and without loss of generality, let \( OPT = \{o_1, o_2, \dots, o_m\} \). For each \( k \), consider the gain of the greedy algorithm in the \( k \)-th step relative to adding the \( k \)-th element of \( OPT \) to \( S \):
    \[ \Delta_f(x_k|S) \geq \frac{1}{m} \sum_{i=1}^{m} \Delta_f(o_i|S) \]
    This inequality is derived from the submodularity of \( f \). The right-hand side is the average marginal gain of adding elements of \( OPT \) to \( S \).
    
The total increase in \( f \) over the first \( k \) steps of the greedy algorithm is at least:
    \[ f(S) \geq \left(1 - \left(1 - \frac{1}{m}\right)^k\right) \cdot f(OPT) \]
    By analyzing the expression on the right and considering the limit as \( k \) approaches \( m \), we obtain the \( \left(1 - \frac{1}{e}\right) \) factor.
    
After \( m \) steps (or fewer, if the greedy algorithm terminates early), the set \( S^* \) constructed by the greedy algorithm satisfies:
    \[ f(S^*) \geq \left(1 - \frac{1}{e}\right) \cdot f(OPT) \]

This completes the proof.
\end{proof}

Given $\mathbb{D}(\mathcal{B}_i) = \sum_{v\in \mathcal{B}_i} (\mathcal{A}(v, \mathcal{B}_i)  +\frac{1}{K-1} \sum_{j=1,j\neq i}^{K}\mathcal{A}(v, \mathcal{B}_j) )$ is the set score function defined on the buffer set $\mathcal{B}_i$ of the $i$-th class, $\triangle_{\mathbb{D}}(v|\mathcal{B}_i)$ is the gain of $f$ choosing $v$ into $\mathcal{B}_{i}$. We have the following Lemmas of $\mathbb{D}$:

\begin{lemma}
\label{lemma1}
The function $\mathbb{D}$ is monotonic with respect to set $\mathcal{B}_{i}$.
\end{lemma}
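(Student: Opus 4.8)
\textbf{Proof proposal for Lemma \ref{lemma1}.}

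The plan is to show directly that adding any new node to a buffer cannot decrease the value of $\mathbb{D}$. Concretely, I would fix a class index $i$, a current buffer $\mathcal{B}_i$, and an arbitrary candidate node $v^\star \in \mathcal{C}_i \setminus \mathcal{B}_i$, and compare $\mathbb{D}(\mathcal{B}_i \cup \{v^\star\})$ with $\mathbb{D}(\mathcal{B}_i)$. Recall that $\mathbb{D}(\mathcal{B}_i) = \sum_{v\in \mathcal{B}_i}\bigl(\mathcal{A}(v,\mathcal{B}_i) + \tfrac{1}{K-1}\sum_{j\neq i}\mathcal{A}(v,\mathcal{B}_j)\bigr)$, where $\mathcal{A}(v,\mathcal{B})$ is the L2 distance between $v$'s probability vector and that of its nearest neighbor in $\mathcal{B}$. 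The difference decomposes into two contributions: (1) the terms contributed by the pre-existing nodes $v\in\mathcal{B}_i$, whose inter-diversity summands $\mathcal{A}(v,\mathcal{B}_j)$ for $j\neq i$ are untouched (since the other buffers $\mathcal{B}_j$ do not change), but whose intra-diversity summand $\mathcal{A}(v,\mathcal{B}_i)$ may change; and (2) the brand-new terms contributed by $v^\star$ itself, namely $\mathcal{A}(v^\star,\mathcal{B}_i\cup\{v^\star\}) + \tfrac{1}{K-1}\sum_{j\neq i}\mathcal{A}(v^\star,\mathcal{B}_j)$.

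For contribution (2): each term is a nonnegative L2 distance, so this whole block is $\geq 0$ and only adds to $\mathbb{D}$. (One should note $\mathcal{A}(v^\star, \mathcal{B}_i \cup \{v^\star\})$ is the distance from $v^\star$ to its nearest neighbor in $\mathcal{B}_i \cup \{v^\star\}$; if the convention excludes the node itself from its own nearest-neighbor search — which is the sensible reading since otherwise this term is trivially $0$ — then it equals $\mathcal{A}(v^\star, \mathcal{B}_i) \geq 0$.) For contribution (1), the key observation is the monotonicity of the nearest-neighbor distance in the set: enlarging the reference set $\mathcal{B}_i$ to $\mathcal{B}_i \cup \{v^\star\}$ can only bring a node's nearest neighbor \emph{closer} or leave it unchanged, i.e. $\mathcal{A}(v,\mathcal{B}_i\cup\{v^\star\}) \leq \mathcal{A}(v,\mathcal{B}_i)$ for every $v\in\mathcal{B}_i$. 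This would make contribution (1) \emph{nonpositive}, which points the wrong way. So I anticipate the actual definition used in the paper's intra-term is the \emph{max} over nodes (consistent with the $\arg\max$ in Eq.~\ref{eq:bs} and Algorithm~\ref{alg:buffer} maximizing diversity) rather than a min-style nearest-neighbor aggregate, or that $\mathcal{A}(v, \mathcal{B}_i)$ means distance to the nearest \emph{already-selected} node and the sum telescopes to a farthest-point-style quantity; in that reading, adding $v^\star$ only appends a new nonnegative summand and the value is nondecreasing. I would reconcile the definition accordingly and then the inequality $\mathbb{D}(\mathcal{B}_i\cup\{v^\star\}) \geq \mathbb{D}(\mathcal{B}_i)$ is immediate term-by-term.

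The main obstacle is precisely this definitional subtlety: making sure the chosen reading of $\mathcal{A}(v,\mathcal{B}_i)$ is the one under which $\mathbb{D}$ is genuinely monotone, and stating it cleanly so that the submodularity argument in Lemma \ref{lemma2} remains consistent with it. Once the right convention is pinned down, the proof is a short term-by-term comparison: the inter-diversity summands of old nodes are invariant, the new node contributes a block of nonnegative distances, and the intra-diversity bookkeeping contributes nonnegatively under the intended aggregation. Hence $\mathbb{D}(\mathcal{B}_i\cup\{v\}) \geq \mathbb{D}(\mathcal{B}_i)$ for all $v$, which is exactly the non-decreasing (monotonic) property, completing the proof.
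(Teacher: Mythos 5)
Your decomposition is exactly the one the paper's proof uses, and you have put your finger on the genuine crux: under the stated definition of $\mathcal{A}(v,\mathcal{B}_i)$ as the distance to the \emph{closest} node of $\mathcal{B}_i$, enlarging $\mathcal{B}_i$ can only shrink the intra-diversity summands of the pre-existing nodes, so your contribution (1) is nonpositive and a term-by-term comparison does not go through. The paper's own proof concedes precisely this point (``the addition of these nodes can decrease the distances of the existing nodes \dots to their closest neighbors'') and then disposes of it with the unquantified assertion that the inter-class gains ``generally overshadow the potential losses from intra-class distances.'' That is not a proof: if the candidate node sits at the midpoint of two far-apart nodes already in $\mathcal{B}_i$ while lying close to every other buffer $\mathcal{B}_j$, its own nonnegative contributions are small but the intra-diversity loss of the existing nodes is of order half their separation, and $\mathbb{D}$ strictly decreases. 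So monotonicity can fail under the literal definition, and the lemma as stated needs either an extra hypothesis or a modified objective.

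Where you and the paper part ways is in how you respond to this obstacle. You decline to assert the inequality and instead conjecture that the intended reading of $\mathcal{A}$ must be a max-style or telescoping farthest-point aggregate under which adding a node only appends nonnegative summands; the paper adopts no such reading---it keeps the nearest-neighbor definition and simply claims the net effect is positive. Your instinct is the right repair if the goal is a correct lemma (a farthest-point or telescoping formulation is genuinely monotone and meshes naturally with the greedy selection), but as written your proposal does not prove the stated lemma either: it ends in a definitional escape hatch rather than a completed argument. To close the gap you would need either (a) an explicit assumption that each candidate's marginal inter-class gain dominates the intra-class loss it induces, or (b) a restated $\mathbb{D}$, fixed up front and carried consistently into Lemma~\ref{lemma2} and the $(1-1/e)$ guarantee.
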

\begin{proof}

For the intra-class diversity, when new nodes are added to \( \mathcal{B}_{i} \), they positively contribute to \( \mathbb{D} \) due to their own distances to other nodes within \( \mathcal{B}_{i} \). The addition of these nodes can decrease the distances of the existing nodes in \( \mathcal{B}_{i} \) to their closest neighbors. For the inter-class diversity,
the addition of new nodes to \( \mathcal{B}_{i} \) increases the value of \( \mathbb{D} \) since these nodes have their own distances to nodes in other classes \( \mathcal{B}_{j} \). The distances from existing nodes in \( \mathcal{B}_{i} \) to nodes in other classes remain unchanged, so there is no loss in inter-class diversity.
Given that the gains from inter-class distances generally overshadow the potential losses from intra-class distances, we can infer that as we include more nodes in \( \mathcal{B}_{i} \), the value of \( \mathbb{D} \) will increase. Therefore, the function is monotonic. \end{proof}

\begin{lemma}
\label{lemma2}
The function $\mathbb{D}$ is submodular with respect to set $\mathcal{B}_{i}$.
\end{lemma}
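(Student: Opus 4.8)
\textbf{Proof proposal for Lemma \ref{lemma2}.}

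The plan is to show that for the set function $\mathbb{D}$ defined on candidate buffers $\mathcal{B}_i \subseteq \mathcal{C}_i$, the marginal gain of adding a fixed node $v$ is non-increasing as the buffer grows; i.e., for $\mathcal{A} \subseteq \mathcal{B} \subseteq \mathcal{C}_i$ and $v \in \mathcal{C}_i \setminus \mathcal{B}$, we have $\mathbb{D}(\mathcal{A}\cup\{v\}) - \mathbb{D}(\mathcal{A}) \geq \mathbb{D}(\mathcal{B}\cup\{v\}) - \mathbb{D}(\mathcal{B})$. I would split $\mathbb{D}$ into its two additive pieces — the intra-diversity term $\mathbb{D}^{\text{intra}}(\mathcal{B}_i) = \sum_{u\in\mathcal{B}_i}\mathcal{A}(u,\mathcal{B}_i)$ and the inter-diversity term $\mathbb{D}^{\text{inter}}(\mathcal{B}_i) = \sum_{u\in\mathcal{B}_i}\frac{1}{K-1}\sum_{j\neq i}\mathcal{A}(u,\mathcal{B}_j)$ — and argue submodularity of each separately, since a sum of submodular functions is submodular.

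For the inter-diversity term, note that $\mathcal{A}(u,\mathcal{B}_j)$ for $j\neq i$ does not depend on $\mathcal{B}_i$ at all (the other buffers are fixed when we optimize $\mathcal{B}_i$), so adding $v$ to $\mathcal{B}_i$ contributes exactly the constant $\frac{1}{K-1}\sum_{j\neq i}\mathcal{A}(v,\mathcal{B}_j)$ regardless of the current contents of $\mathcal{B}_i$. A set function whose marginal gains are constant is modular, hence trivially submodular. The substance is therefore the intra term. Here $\mathcal{A}(u,\mathcal{B}_i)$ is the distance from $u$ to its closest node in $\mathcal{B}_i$, so $\mathbb{D}^{\text{intra}}(\mathcal{B}_i) = \sum_{u\in\mathcal{B}_i}\min_{w\in\mathcal{B}_i,\,w\neq u} d(u,w)$ (a sum-of-nearest-neighbor-distances objective). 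When we add $v$, the change decomposes into (i) the new node's own contribution $\min_{w\in\mathcal{B}_i} d(v,w)$, which is a nearest-neighbor distance and hence non-increasing in $\mathcal{B}_i$ (a larger set can only bring $v$'s nearest neighbor closer), and (ii) the change in each existing node $u$'s nearest-neighbor distance, $\min(d_{\mathcal{B}_i}(u), d(u,v)) - d_{\mathcal{B}_i}(u) \leq 0$, whose magnitude $\max(0, d_{\mathcal{B}_i}(u) - d(u,v))$ is itself non-increasing in $\mathcal{B}_i$ because $d_{\mathcal{B}_i}(u)$ shrinks as $\mathcal{B}_i$ grows. So the marginal gain is a sum of a non-increasing nonnegative piece minus a non-decreasing nonnegative piece, which is non-increasing in $\mathcal{B}_i$ — exactly submodularity. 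I would also handle the edge case where $\mathcal{B}_i$ is a singleton or empty (so the min over $w\neq u$ is vacuous) by adopting the algorithm's convention that the first node is seeded separately, so submodularity is only needed from the second element onward.

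The main obstacle, and the point I would be most careful about, is the interaction between the two effects in the intra term: adding $v$ both adds a positive term (its own distance) and can \emph{decrease} several existing terms, so the marginal gain is a difference of two quantities and is not obviously non-increasing without tracking both pieces' monotonicity in $\mathcal{B}_i$ simultaneously. In fact, strictly speaking the "closest-neighbor" intra objective can fail to be monotone (adding a nearby node lowers others' distances), which is why Lemma \ref{lemma1} leans on the inter term dominating; for submodularity I would make the diminishing-returns bookkeeping rigorous by writing the marginal gain explicitly as $\Delta_{\mathbb{D}}(v\mid\mathcal{B}_i) = \min_{w\in\mathcal{B}_i}d(v,w) - \sum_{u\in\mathcal{B}_i}\max\bigl(0,\, d_{\mathcal{B}_i}(u) - d(u,v)\bigr) + \frac{1}{K-1}\sum_{j\neq i}\mathcal{A}(v,\mathcal{B}_j)$ and checking term-by-term that passing from $\mathcal{A}$ to a superset $\mathcal{B}$ can only decrease the first term, can only decrease the (nonnegative) summands being subtracted, and leaves the third unchanged — so the whole expression is non-increasing, giving the required inequality.
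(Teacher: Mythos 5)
Your overall route — splitting $\mathbb{D}$ into the intra-diversity and inter-diversity pieces, noting that the inter piece is modular because its marginal gain $\frac{1}{K-1}\sum_{j\neq i}\mathcal{A}(v,\mathcal{B}_j)$ does not depend on the current contents of $\mathcal{B}_i$, and reducing everything to the nearest-neighbour intra term — is the same as the paper's. The gap is in your treatment of the intra term, and it is a genuine one: your final ``term-by-term'' check is internally inconsistent. You write the marginal gain as $\min_{w\in S}d(v,w)-\sum_{u\in S}\max\bigl(0,d_S(u)-d(u,v)\bigr)$ and then claim that passing from $\mathcal{A}$ to a superset $\mathcal{B}$ ``can only decrease the first term'' and ``can only decrease the (nonnegative) summands being subtracted,'' concluding the whole expression is non-increasing. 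But decreasing a quantity that is being \emph{subtracted} pushes the expression \emph{up}, so the two effects pull in opposite directions and the conclusion does not follow. (Earlier you assert the subtracted piece is ``non-decreasing,'' which is the direction you would actually need, but that contradicts your term-by-term claim; and in truth the subtracted total is not monotone in either direction, because each old summand shrinks while the superset contributes brand-new nonnegative damage terms for the nodes of $\mathcal{B}\setminus\mathcal{A}$.)

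In fact, with the natural reading of $\mathcal{A}(u,\mathcal{B}_i)$ as the distance to the closest \emph{other} node, the intra objective is not submodular. Take points on a line: $\mathcal{A}=\{0,10\}$, $\mathcal{B}=\{0,\,0.1,\,10\}$, $v=5$. Then $\mathbb{D}^{\mathrm{intra}}(\mathcal{A})=20$ and $\mathbb{D}^{\mathrm{intra}}(\mathcal{A}\cup\{5\})=15$, so $\Delta(v\mid\mathcal{A})=-5$; while $\mathbb{D}^{\mathrm{intra}}(\mathcal{B})=0.1+0.1+9.9=10.1$ and $\mathbb{D}^{\mathrm{intra}}(\mathcal{B}\cup\{5\})=0.1+0.1+4.9+5=10.1$, so $\Delta(v\mid\mathcal{B})=0>-5$, violating diminishing returns; the inter term adds the same constant to both marginal gains and cannot repair this. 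To be fair, the paper's own proof of this lemma has the same hole — it argues only about the new node's own nearest-neighbour distance and never accounts for the reduction the new node causes in the existing nodes' distances — so your more explicit bookkeeping has exposed a defect in the published argument rather than missed a trick it uses. A watertight proof would require either a different diversity functional (e.g.\ a facility-location-style objective, which is genuinely monotone and submodular) or an extra assumption ruling out configurations like the one above.
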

\begin{proof}
For the function \( \mathbb{D} \), to be submodular, it must satisfy the following condition: For any set \( \mathcal{B}_{i} \) and node \( v \), and any set \( \mathcal{V}^- \) from \( \mathcal{C}_{i} \) such that \( v \) is not an element of \( \mathcal{V}^- \), the following inequality holds: 
\[ \mathbb{D}(\mathcal{B}_i \cup v) -  \mathbb{D}(\mathcal{B}_i) \geq \mathbb{D}(\mathcal{B}_i \cup \mathcal{V}^- \cup v) -  \mathbb{D}(\mathcal{B}_i \cup \mathcal{V}^-) 
\]

For the first term $\sum_{v\in \mathcal{B}_i} \mathcal{A}(v, \mathcal{B}_i)$, given the definition of the function $\mathcal{A}$, the distance to the nearest point in $\mathcal{B}_i$ for any $v$ will be greater than or equal to the distance to the nearest point in $\mathcal{B}_i\bigcup \mathcal{V}^-$. Therefore, adding $v$ to $\mathcal{B}_i$ can potentially reduce the distance more significantly than adding it to $\mathcal{B}_i\bigcup \mathcal{V}^-$.
Regarding the subsequent term, which sums the distances from all nodes in the set to other sets of buffers $\mathcal{B}_j$, it can be deduced that adding $v$ into either $\mathcal{B}_i$ or $\mathcal{B}_i\bigcup \mathcal{V}^-$ yields an identical gain in the function's value.
In light of the above observations, we can validate the aforementioned inequality. Consequently, the function $\mathbb{D}$ is submodular with respect to set $\mathcal{B}_{i}$.
\end{proof}

Eventually, our optimal solution in Equation \ref{eq:bs} can be written as:
\[ \mathcal{B}_i = \arg\max\limits_{\mathcal{B}_i \subset \mathcal{C}_i} \mathbb{D}(\mathcal{B}_i)
\] 

The proposed greedy Algorithm \ref{alg:buffer} aims to sequentially add elements to $ \mathcal{B}_i$ based on the largest marginal gain of $\mathbb{D}$, which is monotonic and submodular.
Based on Theorem \ref{theorem1}, Lemma \ref{lemma1}, and Lemma \ref{lemma2}, we can establish the validity of Proposition \ref{prop1}.

\subsection{Derivation of Loss $\mathcal{L}_{CGSE}$.}

In the Diversified Memory Generation Replay section, we aim to use the variational node embeddings to generate a reconstructed graph on the buffer nodes. The optimization problem is described by the likelihood of observing a reconstructed adjacency matrix \( \widehat{A} \) based on surrogate source node embeddings \( \hat{Z} \):

\begin{theorem}
Given the variational posterior \(q(\hat{z}_i|z)\) via the instantiating of synthesized node embeddings and a prior distribution \(p(\hat{z}_i)\) acting as a regularization for \(q(\hat{z}_i|z)\), we have the Evidence Lower Bound (ELBO), which is a surrogate objective for maximizing the log-likelihood:
\end{theorem}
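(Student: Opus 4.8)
The plan is to derive $\mathcal{L}_{CGSE}$ as the negative of a standard evidence lower bound for the latent-variable model $p(\widehat{A},\widehat{Z}) = p(\widehat{A}\mid\widehat{Z})\,p(\widehat{Z})$, in which $p(\widehat{Z}) = \prod_i p(\hat z_i)$ with each $p(\hat z_i) = \mathcal{N}(0,I)$ and $p(\widehat{A}\mid\widehat{Z}) = \prod_{i,j}\mathrm{Bernoulli}(\widehat{A}_{ij};\widehat{p}_{ij})$ with $\widehat{p}_{ij} = \mathrm{sigmoid}(\hat z_i^{\top}\hat z_j)$. First I would start from the marginal log-likelihood $\log p(\widehat{A}) = \log\int p(\widehat{A}\mid\widehat{Z})\,p(\widehat{Z})\,d\widehat{Z}$, insert the variational posterior $q_{\phi_v}(\widehat{Z}\mid Z)$ into the integrand by multiplying and dividing, and apply Jensen's inequality to the concave logarithm to obtain
\[
\log p(\widehat{A}) \;\ge\; \mathbb{E}_{q_{\phi_v}(\widehat{Z}\mid Z)}\big[\log p(\widehat{A}\mid\widehat{Z})\big] \;-\; \mathrm{KL}\big(q_{\phi_v}(\widehat{Z}\mid Z)\,\big\|\,p(\widehat{Z})\big).
\]
An equivalent, slightly cleaner route is the exact identity $\log p(\widehat{A}) = \mathrm{ELBO} + \mathrm{KL}\big(q_{\phi_v}(\widehat{Z}\mid Z)\,\|\,p(\widehat{Z}\mid\widehat{A})\big)$ followed by discarding the non-negative posterior-gap term; this makes both the direction of the bound and the phrase \emph{surrogate objective for maximizing the log-likelihood} transparent.

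Next I would specialize the two terms so they match Eq.~\ref{eq:cgse}. Using the mean-field factorization $q_{\phi_v}(\widehat{Z}\mid Z) = \prod_i q_{\phi_v}(\hat z_i\mid z_i)$ together with the factorized prior, the KL term splits as $\sum_i \mathrm{KL}\big(q_{\phi_v}(\hat z_i\mid z_i)\,\|\,p(\hat z_i)\big)$, which is exactly the regularizer in Eq.~\ref{eq:cgse}; since $q_{\phi_v}(\hat z_i\mid z_i) = \mathcal{N}(z_i^{\mu},\mathrm{diag}(z_i^{\sigma}))$ and $p(\hat z_i)=\mathcal{N}(0,I)$ this has the familiar closed form, though I would only need to note its existence. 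For the reconstruction term, substituting the Bernoulli likelihood gives $\mathbb{E}_{q_{\phi_v}}\big[\sum_{i,j}\widehat{A}_{ij}\log\widehat{p}_{ij}+(1-\widehat{A}_{ij})\log(1-\widehat{p}_{ij})\big]$, i.e.\ the negative binary cross-entropy over buffer-node pairs appearing in Eq.~\ref{eq:cgse}, and I would record that in practice the expectation under $q_{\phi_v}$ is evaluated with the reparameterized sample $\widehat{Z} = Z^{\mu} + Z^{\sigma}\odot\epsilon$ introduced in Section~\ref{m:part2}. Negating the resulting lower bound then yields $\mathcal{L}_{CGSE}$ term for term.

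I expect the main difficulty to be expository rather than technical: I need to state precisely which independence assumptions are in force — the mean-field posterior across the $Kb$ buffer nodes, conditional independence of edges given the embeddings, and the treatment of the label-induced graph $\widehat{A}$ as the fixed observation whose likelihood is modeled — so that the term-by-term identification with Eq.~\ref{eq:cgse} is unambiguous. A second small point to reconcile is that Eq.~\ref{eq:cgse} writes $\widehat{A}_{ij}$ in front of both $\log\widehat{p}_{ij}$ and $\log(1-\widehat{p}_{ij})$, whereas the Bernoulli log-likelihood naturally produces $(1-\widehat{A}_{ij})$ in the second slot; I would either flag this as a typographical slip or make explicit the $\{0,1\}$-masking convention that is intended. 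Apart from these clarifications, the argument is the standard VAE/VGAE derivation and needs nothing beyond the concavity of $\log$ and the non-negativity of the KL divergence.
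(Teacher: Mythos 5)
Your derivation follows essentially the same route as the paper's own proof: expand $\log p(\widehat{A})$ as a marginal over $\widehat{Z}$, multiply and divide by the variational posterior, apply Jensen's inequality, and identify the resulting regularizer as the KL divergence to the prior, then specialize to the Bernoulli decoder and Gaussian KL closed form. Your additional observations --- the equivalent exact-identity formulation of the bound and the typographical slip where $\widehat{A}_{ij}$ rather than $(1-\widehat{A}_{ij})$ multiplies $\log(1-\widehat{p}_{ij})$ in Eq.~\ref{eq:cgse} --- are both correct and go slightly beyond what the paper states.
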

\[
\log p(\widehat{A}) \geq \mathbb{E}_{q_{\phi_v}(\widehat{Z}|Z)}[\log p(\widehat{A} | \widehat{Z})] - \text{KL}(q_{\phi_v}(\hat{z}_i|z)||p(\hat{z}_i))
\]
where \( \text{KL}(\cdot) \) represents the Kullback-Leibler (KL) divergence. The term \(p(\widehat{A} | \widehat{Z})\) denotes the reconstruction probability from the surrogate source node embeddings to the reconstructed adjacency matrix.

\begin{proof}

The optimization problem we are dealing with is described by the likelihood of observing a reconstructed adjacency matrix \(\widehat{A}\) based on surrogate source node embeddings \(\widehat{Z}\):

\[
\begin{aligned}
\log p(\widehat{A}) 
&= \log \int p(\widehat{A},\widehat{Z}) d\widehat{Z} \\
&= \log \int p(\widehat{A} | \widehat{Z}) p(\widehat{Z})d\widehat{Z} \\
\end{aligned}
\]

This equation encapsulates the joint probability of observing the graph and the latent variables. To enable tractable optimization, we introduce a variational distribution, \( q_{\phi_v}(\hat{z}_i|z) \), approximating the true posterior of the node embeddings. Multiplying and dividing by this term, we can express the likelihood as:

\[
\log p(\widehat{A}) = \log \int p(\widehat{A} | \widehat{Z}) \frac{p(\widehat{Z})}{q_{\phi_v}(\hat{z}_i|z)} q_{\phi_v}(\hat{z}_i|z) d\widehat{Z}
\]

This expression can be interpreted as an expectation with respect to the variational distribution:

\[
\log p(\widehat{A}) = \log \mathbb{E}_{q_{\phi_v}} \left[ p(\widehat{A} | \widehat{Z}) \frac{p(\widehat{Z})}{q_{\phi_v}(\hat{z}_i|z)} \right]
\]

Applying Jensen's inequality allows us to bring the logarithm inside the expectation, leading to a lower bound:

\[
\begin{aligned}
\log p(\widehat{A}) \geq & \mathbb{E}_{q_{\phi_v}} \left[ \log p(\widehat{A} | \widehat{Z}) \frac{p(\widehat{Z})}{q_{\phi_v}(\hat{z}_i|z)} \right] \\
 &= \mathbb{E}_{q_{\phi_v}} \left[ \log p(\widehat{A} | \widehat{Z}) \right] + \mathbb{E}_{q_{\phi_v}} \left[ \log \frac{p(\widehat{Z})}{q_{\phi_v}(\hat{z}_i|z)} \right]
\end{aligned}
\]

We recognize the second term in the above inequality as the negative of the KL divergence between the variational distribution and the prior:

\[
\text{KL}(q_{\phi_v}(\hat{z}_i|z)||p(\hat{z}_i)) = \mathbb{E}_{q_{\phi_v}}[\log \frac{q_{\phi_v}(\hat{z}_i|z)}{p(\hat{z}_i)}]
\]

The KL divergence acts as a regularization term, penalizing deviations of our variational distribution from the prior. Combining the terms, we have the Evidence Lower Bound (ELBO), which is a surrogate objective for maximizing the log-likelihood:

\[
\log p(\widehat{A}) \geq \mathbb{E}_{q_{\phi_v}}[\log p(\widehat{A} | \widehat{Z})] - \text{KL}(q_{\phi_v}(\hat{z}_i|z)||p(\hat{z}_i))
\]

Thus, we obtain the optimization objective \(\mathcal{L}_{CGSE}\). Specifically, the actual calculation formulas of the two components of \(\mathcal{L}_{CGSE}\) are defined as:

\[
\mathbb{E}_{q_{\phi_v}}[\log p(\widehat{A}|\widehat{Z})] = \left[\widehat{A}_{ij}\log \widehat{p}_{ij} + \widehat{A}_{ij}\log (1 - \widehat{p}_{ij})\right]
\]

\[
\begin{aligned}
\text{KL}(q_{\phi_v}(\hat{z}_i|z)||p(\hat{z}_i))
& =\sum_{k=1}^K \text{KL}\left( \mathcal{N}(\mu_k, \Sigma_k) || \mathcal{N}(0, I) \right) \\
= \sum_{k=1}^K &\frac{1}{2} \left( \text{tr}(\Sigma_k) + \mu_k^{\mathsf{T}} \mu_k - h - \log |\Sigma_k| \right)
\end{aligned}
\]

where \(tr(\cdot)\) denotes the trace of a matrix, \(h\) is the dimensionality of the source node embedding \(\hat{z}_i\), and \(|\Sigma_k|\) is the determinant of \(\Sigma_k\).
\end{proof}

\subsection{Synchronized Min-Max Optimizating of Overall Loss $\mathcal{L}_{\method}$.}
Recall that the overall optimization objective is as follows:  
\[
     \min_{\theta} \mathcal{L}^{t} +  \min_{\theta, \phi_v} \left\{ \lambda_1 \mathcal{L}_{RP} + \lambda_2 \max_{\phi_d} \{\mathcal{L}_{MISE}\} + \lambda_3 \mathcal{L}_{CGSE} \right\}, 
\]
where $\lambda_1$, $\lambda_2$ and $\lambda_3$ is the weights to balance different losses. Then, the parameters $\theta$, $\phi_v$, and $\phi_d$ of \method{} can be optimized by Stochastic Gradient Decent (SGD) as follows:
\[
\label{eq:gradient}
\begin{aligned}
&\theta  \leftarrow \theta -\mu \left(\frac{\partial\mathcal{L}^{t}}{\partial\theta} + \lambda_1 \frac{\partial\mathcal{L}_{RP}}{\partial\theta}+ \lambda_2 \frac{\partial\mathcal{L}_{MISE}}{\partial\theta}+ \lambda_3 \frac{\partial\mathcal{L}_{CGSE}}{\partial\theta}\right), \\
&\phi_v  \leftarrow  \phi_v - \mu \left(\lambda_1 \frac{\partial\mathcal{L}_{RP}}{\partial\phi_v}+ \lambda_2 \frac{\partial\mathcal{L}_{MISE}}{\partial\phi_v}+ \lambda_3 \frac{\partial\mathcal{L}_{CGSE}}{\partial\phi_v}\right), \\
&\phi_d  \leftarrow  \theta -\mu \left( -\lambda_2\frac{\partial\mathcal{L}_{MISE}}{\partial\phi_d}\right). \\
\end{aligned}
\]
where $\mu$ is the learning rate.

\begin{figure*}[ht]
\label{fig:nodes}
    \centering
    \begin{subfigure}[b]{0.23\textwidth}
        \includegraphics[width=\textwidth]{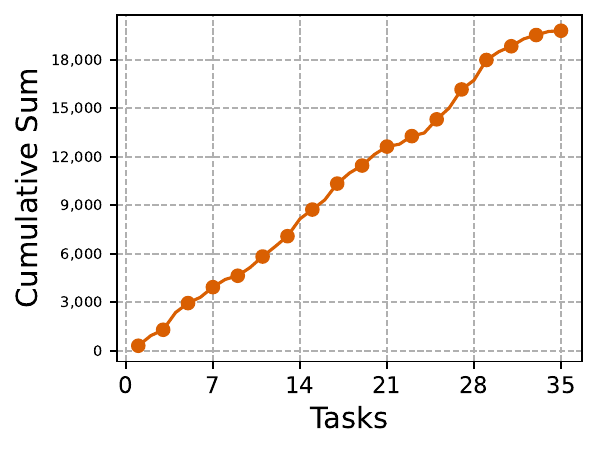}
        \caption{CoraFull}
        \label{fig:corafull_nodes}
    \end{subfigure}
    \hfill
    \begin{subfigure}[b]{0.23\textwidth}
        \includegraphics[width=\textwidth]{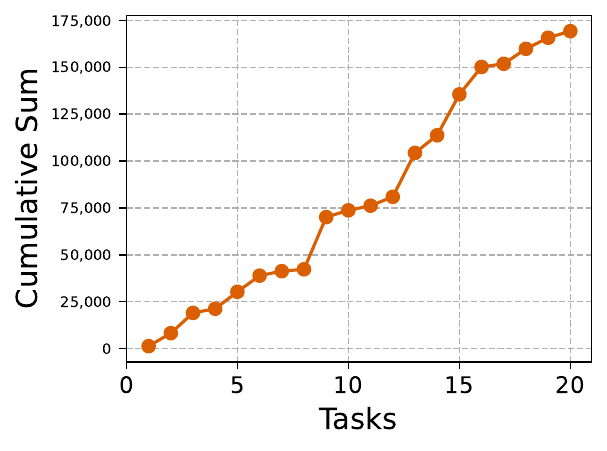}
        \caption{OGB-Arxiv}
        \label{fig:arxiv_nodes}
    \end{subfigure}
    \hfill
    \begin{subfigure}[b]{0.23\textwidth}
        \includegraphics[width=\textwidth]{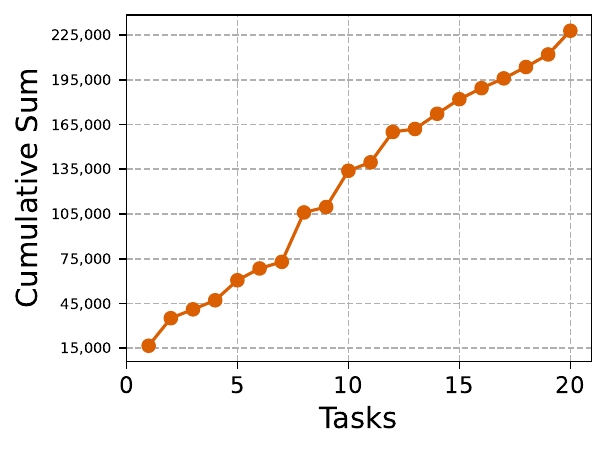}
        \caption{Reddit}
        \label{fig:reddit_nodes}
    \end{subfigure}
    \hfill
    \begin{subfigure}[b]{0.23\textwidth}
        \includegraphics[width=\textwidth]{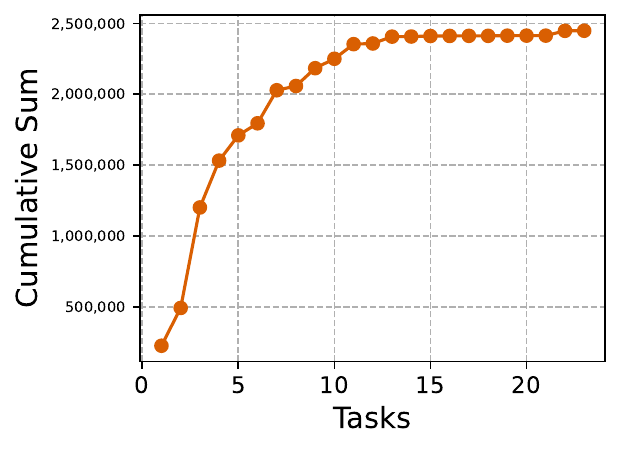}
        \caption{OGB-Product}
        \label{fig:product_nodes}
    \end{subfigure}
    \caption{Cumulative number of nodes within different growing graphs.}

    \label{fig:nodes}
\end{figure*}
\begin{table}[htbp]
  \caption{The statistics of four datasets.}
  \label{tab:statistics}
  \centering
  \scalebox{0.8}{
  \begin{tabular}{ccccc}
    \toprule
    \textbf{Dataset} & \#Nodes & \#Edges & \#Classes & \#Tasks \\
    \midrule 
    CoreFull & 19,793 & 130,622 & 70 & 35 \\
    OGB-Arxiv & 169,343 & 1,166,243 & 40 & 20 \\
    Reddit & 232,965 & 114,615,892 & 40 & 20 \\
    OGB-Products & 2,449,029 & 61,859,140 & 46 & 23 \\
    \bottomrule
  \end{tabular}}
\end{table}

\subsection{Experiment Setup}
\label{exp_setup}
\subsubsection{Datasets}

We use the four graph datasets, CoraFull, OGB-Arxiv, Reddit, and OGB-Products, introduced in Continual Graph Learning Benchmark (CGLB)~\citep{zhang2022cglb}. 
CoraFull~\citep{mccallum2000automating} and  OGB-Arxiv~\citep{wang2020microsoft} are citation networks with papers as nodes and citation relationships as edges and labeled based on paper topics. 
Reddit~\citep{hamilton2017inductive} is a social network with posts as nodes and posts are connected if the same user comments. The node labels are the community the posts belong to.
OGB-Products~\citep{hu2020open} is a product co-purchasing network with nodes representing products and edges indicating that the connected products are purchased together. The node labels are the class the products belong to.
Each graph is considered a volume-increasing graph over time, where its size expands with the arrival of new nodes with several novel classes at each time, leading to a more complex node classification task.
To elaborate, given an original graph comprised of $C$ class nodes, these classes are partitioned into $m =\frac{C}{k}$ groups with the original class order. This ensures that each group contains $k$ classes of nodes. The graph is divided into $m$ sub-graphs accordingly, where each sub-graph represents the newly added data with novel classes at a particular time, resulting in a new learning task.
In the context of continual learning, we commence by training a model on the first graph. Subsequently, at each time, we integrate the succeeding sub-graph into the prevailing graph, prompting the model to continuously learn and adapt to new tasks.
To accentuate the forgetting problems in continual learning, we set $k$ as 2 for each graph, which basically maximizes the number of tasks on a growing graph, challenging the capabilities of baselines.
The detailed statistics of the datasets are shown in Table \ref{tab:statistics}. Figure \ref{fig:nodes} shows the curves of the cumulative number of nodes within different growing graphs..

\subsubsection{Baselines}
First, we establish the upper bound and lower bound baselines of our problem. The upper bound baseline \textbf{Joint} is defined in Section \ref{sec:problem}, which involves continuously training the model each time with all accumulated training nodes from previous and new tasks, thus without the forgetting problem. The lower bound baseline \textbf{Fine-tune} employs only the newly arrived training nodes for model adaptation, yielding a fine-tuning mode that easily forgets the previous tasks.
Then, we set multiple continual learning models for graph as baselines, including EWC~\citep{kirkpatrick2017overcoming}, MAS~\citep{aljundi2018memory}, GEM~\citep{lopez2017gradient},  LwF~\citep{li2017learning}, TWP~\citep{liu2021overcoming}, ER-GNN~\citep{zhou2021overcoming}, SSM~\citep{zhang2022sparsified}, and SEM~\citep{zhang2023ricci}. A detailed introduction to these methods can be found in the Appendix.
As some of the methods are not originally designed for graph data, we utilize a Graph Neural Network (GNN) as the backbone of all baselines for extracting graph information. The specific GNN model of baselines is selected from GCN~\citep{welling2016semi}, SGC~\citep{wu2019simplifying}, GAT~\citep{casanova2018graph}, and GIN~\citep{xu2018powerful}, based on which yields the best performance. For a fair comparison, the backbone is set as two layers and the hidden dimension as 256 for all baselines. 
Here, we give a detailed introduction to the baseline models.
\begin{itemize}
    \item \textbf{EWC} imposes a constraint, based on the Fisher Information Matrix and weight adjustments, to preserve knowledge from prior tasks during new task learning. 
    \item \textbf{LwF} learns new tasks without necessitating the retention of old task samples by training neural networks to minimize discrepancies between predictions on new and preceding tasks.
    \item \textbf{GEM} facilitates effective continual learning by storing knowledge from previous tasks and archiving prior tasks while permitting updates solely if they don't amplify the loss on these tasks.
    \item \textbf{MAS} mitigates catastrophic forgetting by imposing penalties on modifications to critical synapses, determined by the influence of weight changes on output, and preserves old knowledge without requiring the storage of previous tasks.
    \item \textbf{TWP} preserves both the minimized loss and the topological structure of the graph, and harmonizes past and new task knowledge, thus enabling more robust model performance and circumventing catastrophic forgetting.
    \item \textbf{SSM} uses a strategy that sparsifies computational graphs into a fixed size before storing them in memory, which not only minimizes memory consumption but also enhances the learning of tasks from diverse classes.
    \item \textbf{SEM} develops the subgraph episodic memory to store the explicit topological information in the form of computation subgraphs and perform memory replay-based continual graph representation learning.
\end{itemize}

\subsubsection{Experimental Setting}
We use a 2-layer GCN as our backbone.
For the baselines, we report their experimental results from SEM~\citep{zhang2022sparsified}, and
we set the same experimental setting with the bucket size of \method{} as 60 for CoraFull and 400 for other datasets. 
For the diversified memory replay, we set the number of generated diversified node embeddings in the memory buffers the same as the original counts. Following~\citep{zhang2022cglb, zhang2022sparsified}, the training rate for each task is set as 60\%, the validation rate as 20\%, and the test rate as 20\%.
The training epoch for each task is set as 200 epochs.
The training nodes of previous tasks exclusively come from the buffer.
The learning rate is set as 0.001 and the weight decay as 1e-3,
The weights for the four losses are set as $[1, 20, 1, 1]$, respectively.
For evaluation metric, we utilize the Average Accuracy (AA) meaning the mean accuracy of the model on all tasks after learning the final task, and Average Forgetting (AF) meaning the average reduction in accuracy for each task from when it was first learned to after the model has learned all tasks. we first report the accuracy matrix $M^{acc}\in \mathbb{R}^{T\times T}$, which is lower triangular where $M^{acc}_{i,j} (i\geq j)$ represents the accuracy on the $j$-th tasks after learning the task $i$. To derive a single numeric value after learning all tasks, we report the Average Accuracy (AA) $\frac{1}{T}\sum_{i=1}^T M^{acc}_{T,i}$ and the Average Forgetting (AF) $\frac{1}{T-1}\sum_{i=1}^{T-1} M^{acc}_{T,i} - M^{acc}_{i,i}$ for each task after learning the last task.


\subsection{Extental Experiments}
\subsubsection{Visualization of Accuracy Matrix}
To further understand the dynamics of our methods under the incremental learning task, we visualize the accuracy matrices of \method{} and SEM on the CoraFull and OGB-Arxiv datasets. 
The results are presented in Figure \ref{fig:am} and \ref{fig:am2}.
In these matrices, each row represents the performance across all tasks upon learning a new one, while each column captures the evolving performance of a specific task as all tasks are learned sequentially.
In the visual representation, lighter shades signify better performance, while darker hues indicate inferior outcomes.
Upon comparison, \method{} predominantly displays lighter shades across the majority of blocks compared to SEM, showing that \method{} can consistently achieve better performance.

\begin{figure*}[ht]
    \begin{subfigure}[b]{0.24\textwidth}
        \includegraphics[width=\textwidth]{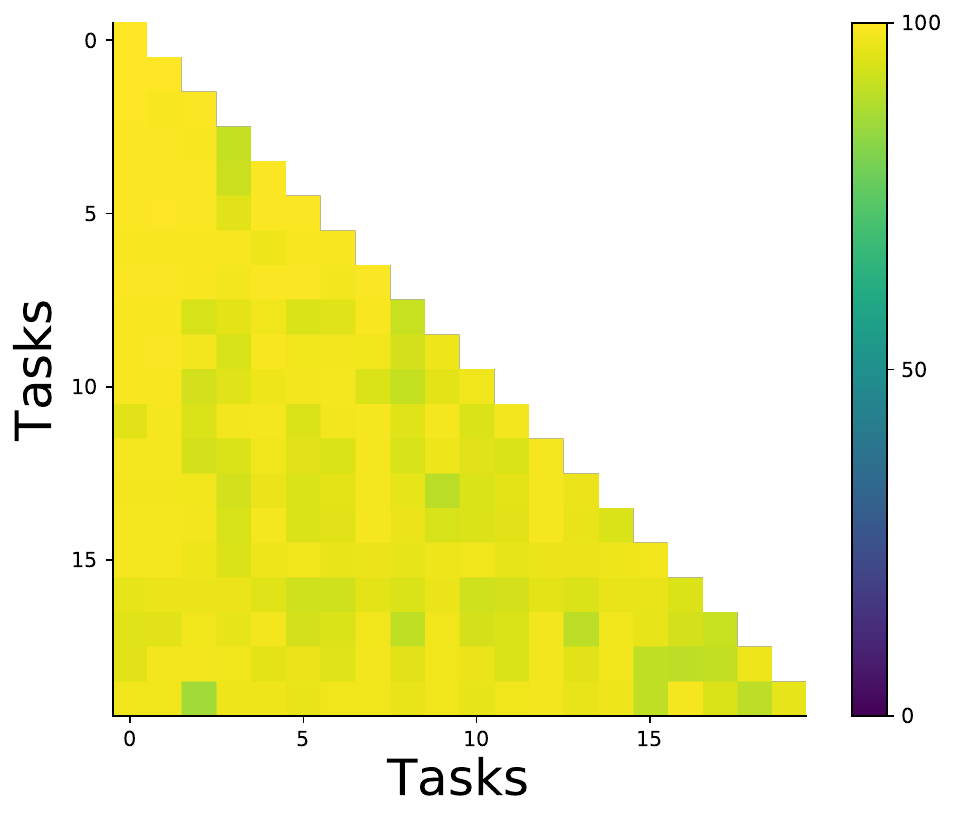}
        \captionsetup{skip=0pt} 
        \caption{Reddit (SEM)}
        \label{fig:product_martix}
    \end{subfigure}
     \hfill
    \begin{subfigure}[b]{0.24\textwidth}
        \includegraphics[width=\textwidth]{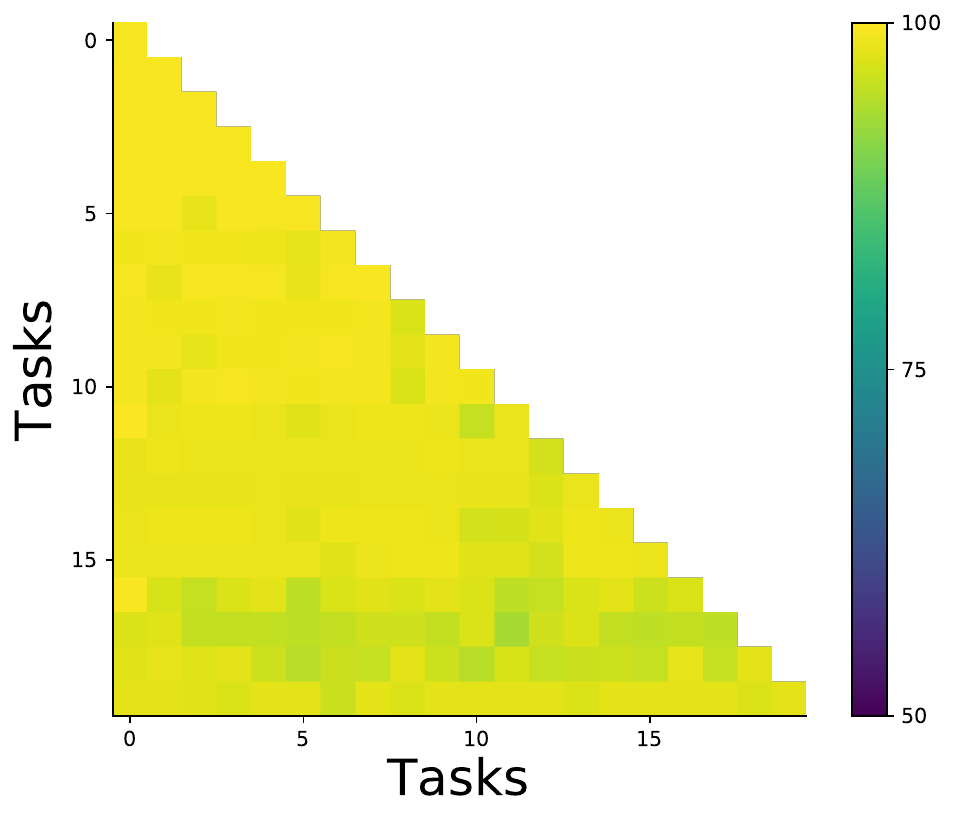}
        \captionsetup{skip=0pt} 
        \caption{Reddit (\method{})}
        \label{fig:reddit_martix}
    \end{subfigure}
    \begin{subfigure}[b]{0.24\textwidth}
        \includegraphics[width=\textwidth]{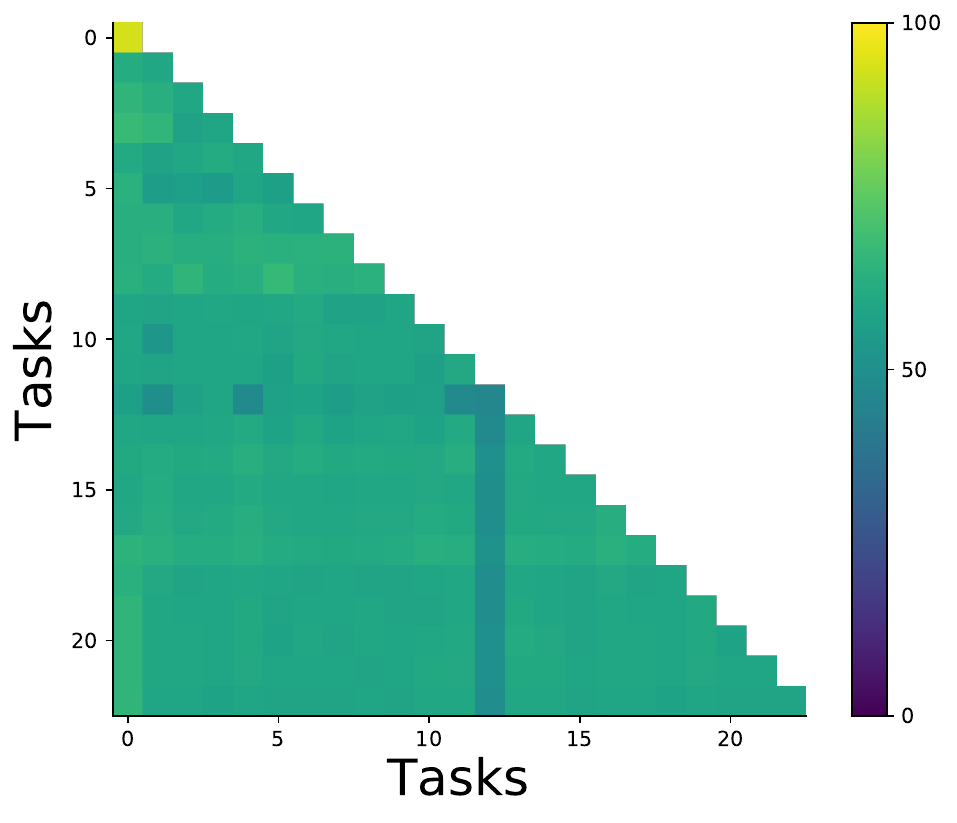}
        \captionsetup{skip=0pt} 
        \caption{OGB-Product (SEM)}
        \label{fig:product_martix}
    \end{subfigure}
     \hfill
    \begin{subfigure}[b]{0.24\textwidth}
        \includegraphics[width=\textwidth]{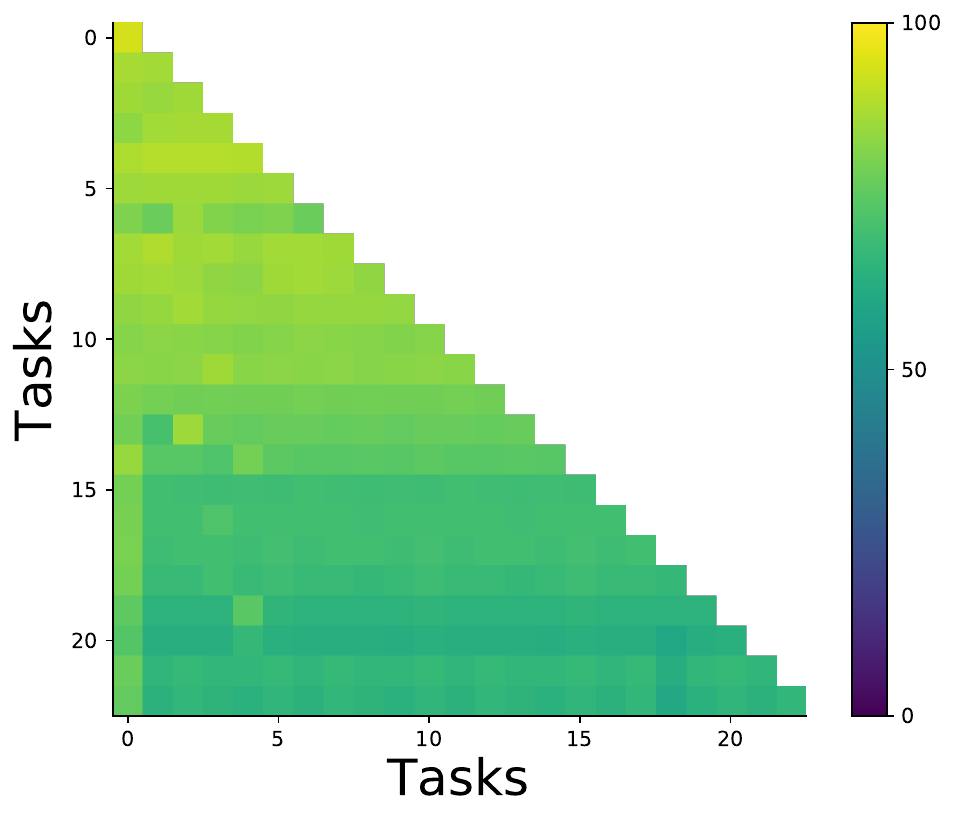}
        \captionsetup{skip=0pt} 
        \caption{OGB-Product (\method{})}
        \label{fig:reddit_martix}
    \end{subfigure}
    \caption{Accuracy matrices of \method{} and SEM in different datasets..}
    \label{fig:am2}
\end{figure*}

\subsubsection{Analysis of Buffer Selection.} 
This experiment aims to answer: \textit{Is the proposed buffer selection strategy more effective than other selection strategies in selecting representative nodes?}
The choice of buffer sampling strategies is critical in a graph's continual learning, as it determines which knowledge will be stored and later replayed to the model. 
We choose different buffer sampling strategies, including our proposed  Diversified Memory Selection (DMS) method, K-center sampling~\citep{nguyen2018variational}, Coverage Maximization (CM), and Mean of Feature (MF)~\citep{zhou2021overcoming}. 
We perform these sampling methods in our continual learning framework. The results are demonstrated in Table \ref{tab:sample}.
We can observe that our sampling method consistently excels in performance. This consistency can be attributed to exploring both the intra-class and inter-class diversity of nodes within our method that efficiently captures the required diversity among nodes, ensuring a holistic representation of the dataset.
The K-center approach, although it shows commendable results, lags behind our method. This can be attributed to it sharing a similar assumption with our method--capturing diverse sample buffers. However, the K-center approach does not fully consider intra- and inter-class diversity, , leading to comparatively lower performance.
It's worth highlighting that both CM and MF, when implemented within our framework, demonstrate a marked improvement in performance compared to their native ER-GNN framework. This outcome underscores the flexibility of our proposed diversified memory generation replay model, suggesting that it not only complements various sampling strategies but can potentially improve their effectiveness.

\begin{table}[htbp]
  \caption{Comparison of buffer selection methods.}
  \label{tab:sample}
  \centering
    \scalebox{0.8}{
    \begin{tabular}{c|cccc}
    \toprule
    \textbf{Methods} & CoreFull & \makecell[c]{OGB-\\Arxiv} & Reddit & \makecell[c]{OGB-\\Products} \\
    \midrule
    Kcenter  & 74.4$\pm$0.7 & 43.7$\pm$0.6 & 85.2$\pm$2.7 & 49.7$\pm$0.8 \\ 
    CM & 74.5$\pm$0.6 & 40.6$\pm$1.0 & 72.4$\pm$1.4 & 43.5$\pm$0.8 \\
    MF & 74.5$\pm$0.8 & 42.3$\pm$0.9 & 87.2$\pm$0.7 & 49.3$\pm$1.1 \\
    \rowcolor{gray!30} 
    DMS & \textbf{77.8$\pm$0.3} & \textbf{50.7$\pm$0.4} & \textbf{98.1$\pm$0.0} &\textbf{66.0$\pm$0.4} \\
    \bottomrule
    \end{tabular}}
\end{table}

\subsubsection{Analysis of Buffer Sizes.} 
This experiment aims to answer: \textit{How do different buffer size affect the performance of \method{}?} 
To comprehensively understand the influence of buffer size on the performance of \method{}, we evaluate \method{} across different buffer sizes, specifically 40, 50, 60, 70, and 80 for the CoraFull dataset, and 200, 300, 400, 500, and 600 for other datasets. The results are illustrated in Figure \ref{fig:size}. 
There's a clear trend that, generally, as the buffer size increases, the performance also sees an enhancement. This can be intuitively understood: a more extensive buffer can store more data from past tasks, acting as a richer source of knowledge when learning new tasks and thereby mitigating the effects of catastrophic forgetting.
An equally important observation is the consistency in the standard deviation of results across different buffer sizes. The compact standard deviation, even for smaller buffers, is indicative of the robustness of \method{}. 
On both the Reddit and OGB-Product datasets, the performance at a buffer size of 600 exhibited a marginal decline. While it might appear counterintuitive given the general trend, this could be due to the larger buffer size introducing more noise nodes. These nodes, which might not be as representative or essential as others, can dilute the quality of information stored. As a result, they consequently reduce the model's generalization ability-—a consideration that aligns with our discussion in the overall comparison.

\begin{figure}[ht]
    \centering
    \begin{subfigure}[b]{0.245\textwidth}
        \includegraphics[width=\textwidth]{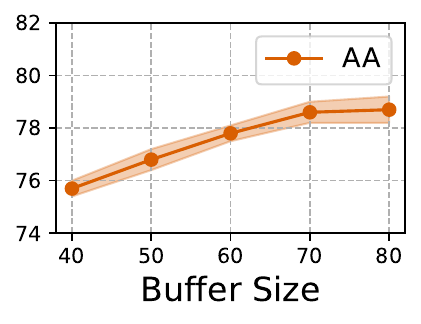}
        \captionsetup{skip=0pt} 
        \caption{CoraFull}
        \label{fig:corafull_nodes}
    \end{subfigure}
    \hfill
    \begin{subfigure}[b]{0.245\textwidth}
        \includegraphics[width=\textwidth]{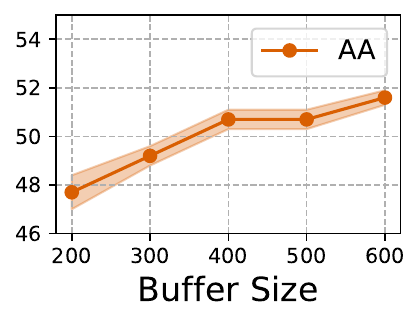}
        \captionsetup{skip=0pt} 
        \caption{OGB-Arxiv}
        \label{fig:arxiv_nodes}
    \end{subfigure}
    \begin{subfigure}[b]{0.245\textwidth}
        \includegraphics[width=\textwidth]{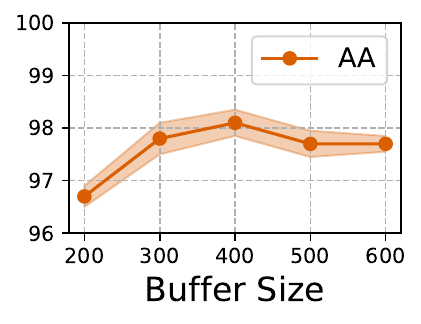}
        \captionsetup{skip=0pt} 
        \caption{Reddit}
        \label{fig:reddit_nodes}
    \end{subfigure}
    \hfill
    \begin{subfigure}[b]{0.245\textwidth}
        \includegraphics[width=\textwidth]{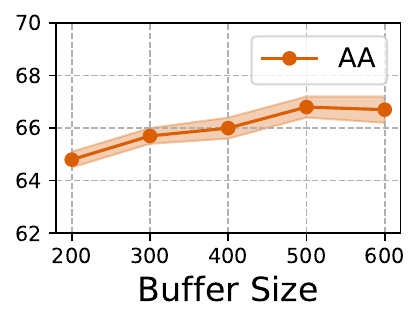}
        \captionsetup{skip=0pt} 
        \caption{OGB-Product}
        \label{fig:product_nodes}
    \end{subfigure}
    \caption{The model performance with different buffer sizes.}
    \label{fig:size}
\end{figure}

\subsection{Detailed Related Works}

\subsubsection{Learning on Growing Graphs.}
Graph-based learning often operates under the assumption that the entire graph structure is available upfront. 
For example, Graph Neural Networks (GNNs)~\citep{ju2024comprehensive,qiao2020tree} have rapidly become one of the most prominent tools for learning graph-structured data, bridging the gap between deep learning and graph theory. 
Representative methods includes GCN~\citep{welling2016semi}, GraphSAGE~\citep{hamilton2017inductive}, and GAT~\citep{velivckovic2018graph}, etc.
However, these methods predominantly operate on static graphs. 
Many graphs in real-world applications, such as social networks and transportation systems, are not static but evolve over time. To accommodate this dynamic nature, various methods have been developed to manage growing graph data~\citep{wang2020streaming, tang2020graph, daruna2021continual, luo2020learning}.
For instance, Evolving Graph Convolutional Networks (EvolveGCN)~\citep{pareja2020evolvegcn} emphasizes temporal adaptability in graph evolution. Temporal Graph Networks (TGNs)~\citep{rossi2020temporal}  operates on continuous-time dynamic graphs represented as a sequence of events. Spatio-Temporal Graph Networks (STGN)\citep{yu2018spatio} integrates spatial and temporal information to enhance prediction accuracy. 
However, these methods primarily concentrate on a singular task in evolving graphs and often encounter difficulties when more complicated tasks emerge as the graph expands.

\vspace{-0.1cm}
\subsubsection{Incremental Learning.}
Incremental learning~\citep{parisi2019continual} 
refers to a evolving paradigm within machine learning where the model continues to learn and adapt after initial training. The continuous integration of new tasks often leads to the catastrophic forgetting problem. 
There are usually two types of continual learning settings--class-incremental learning is about expanding the class space within the same task domain, while task-incremental learning involves handling entirely new tasks, which may or may not be related to previous ones~\citep{schwarz2018progress, castro2018end}.
Typically, three different categories of methods have emerged to address the continual learning problem. 
The first category revolves around regularization techniques~\citep{pomponi2020efficient,lin2023gated}. By imposing constraints, these methods prevent significant modifications to model parameters that are critical to previous tasks, ensuring a degree of stability and retention. The second category encompasses parameter-isolation-based approaches~\citep{wang2021afec,lyu2021multi}. These strategies dynamically allocate new parameters exclusively for upcoming tasks, ensuring that crucial parameters intrinsic to previous tasks remain unscathed. Lastly, memory replay-based methods~\citep{wang2021memory,mai2021supervised} present a solution by selectively replaying representative data from previous tasks to mitigate the extent of catastrophic forgetting,
while are more preferred due to their reduced memory storage requirements and flexibility in parameter training.
This paper delve deeper and present an effective strategy for selecting and replaying memory on graphs.  

\vspace{-0.1cm}
\subsubsection{Graph Class-incremental Learning.}
\revision{
Graph incremental learning, also known as graph continual learning, aims to continually train a graph model on growing graphs to perform more complex tasks. This problem includes two settings, task-incremental learning and class-incremental learning. Graph task incremental learning methods treat the learning as different tasks and assume the task affectations of newly added nodes are known when inference. Many works~\citep{su2023towards,su2023towards1,niureplay,sulimitation} have been introduced recently under this setting.
Class-incremental Learning~\citep{belouadah2021comprehensive} means the specific scenario where the number of classes increases along with the new samples introduced, which does not presuppose knowledge of node class assignments in different tasks, presenting a more complex challenge. Graph class-incremental learning~\citep{lu2022geometer,wang2022streaming,yang2022streaming,tan2022graph} specifies this problem to growing graph data--new nodes introduce unseen classes to the graph.
Methods of graph class-incremental learning~\citep{rakaraddi2022reinforced,kim2022dygrain,sun2023self,feng2022towards,liu2023cat,niu2024graph} strive to retain knowledge of current classes and adapt to new ones, enabling continuous prediction across all classes.
In the past years, various strategies have been proposed to tackle this intricate problem. 
For example, TWP ~\citep{liu2021overcoming} employs regularization to ensure the preservation of critical parameters and intricate topological configurations, achieving continuous learning. HPNs~\citep{zhang2022hierarchical} adaptively choose different trainable prototypes for incremental tasks. 
ER-GNN~\citep{zhou2021overcoming} proposes multiple memory sampling strategies designed for the replay of experience nodes. 
SSM~\citep{zhang2022sparsified} and SEM~\citep{zhang2022sparsified} leverage a sparsified subgraph memory selection strategy for memory replay on growing graphs.
PDGNN~\citep{niureplay} proposes parameter decoupled graph neural networks with topology-aware embedding memory for the graph incremental learning problem. PUMA~\citep{liu2024puma} and  DeLoMe~\citep{zhang2024topology} both use different graph condensation technology to preserve memory graphs. 
However, the trade-off between buffer size and replay effect is still a Gordian knot, i.e., aiming for a small buffer size usually results in ineffective knowledge replay. To address this gap, this paper introduces an effective memory selection and replay method that explores and preserves the essential and diversified knowledge contained within restricted nodes, thus improving the model in learning previous knowledge.}


\end{document}